\newtheorem{theorem}{Theorem}
\newtheorem{lemma}{Lemma}
\begin{document}

\runningauthor{Valerii Likhosherstov*, Jared Davis*, Krzysztof Choromanski, Adrian Weller}

%

%

\twocolumn[

\aistatstitle{CWY Parametrization: a Solution for Parallelized Optimization of Orthogonal and Stiefel Matrices}

\aistatsauthor{ Valerii Likhosherstov$^{*1}$ \And Jared Davis$^{*2,3}$ \And  Krzysztof Choromanski$^{4,5}$ \And Adrian Weller$^{1,6}$ }

\aistatsaddress{ $^1$University of Cambridge \quad $^2$DeepMind \quad $^3$Stanford University \quad $^4$Google Brain \quad $^5$Columbia University \\ $^6$The Alan Turing Institute  \quad $^*$Equal contribution} ]

\begin{abstract}
We introduce an efficient approach for optimization over orthogonal groups on highly parallel computation units such as GPUs or TPUs. As in earlier work, we parametrize an orthogonal matrix as a product of Householder reflections. However, to overcome low parallelization capabilities of computing Householder reflections sequentially, we propose employing an accumulation scheme called the compact WY (or CWY) transform -- a compact parallelization-friendly matrix representation for the series of Householder reflections. We further develop a novel Truncated CWY (or T-CWY) approach for Stiefel manifold parametrization which has a competitive complexity and, again, yields benefits when computed on GPUs and TPUs. We prove that our CWY and T-CWY methods lead to convergence to a stationary point of the training objective when coupled with stochastic gradient descent. We apply our methods to train recurrent neural network architectures in the tasks of neural machine translation and video prediction.
\end{abstract}
\section{INTRODUCTION}

Training weight matrices in a neural network with an orthogonality constraint gives various benefits for a deep learning practitioner, including enabling control over the norm of the hidden representation and its gradient which can be helpful for several reasons. A series of works addresses the problems of exploding or vanishing gradients in recurrent neural networks (RNNs) \citep{hochreiter1998vanishing} by using orthogonal or unitary transition matrices \citep{arjovsky,wisdom,eunn,hh,helfrich,cheap}. Further, orthogonality appears to improve forward and backward information propagation in deep convolutional neural networks where convolutions are parametrized by a Stiefel manifold---a general class of orthogonal matrices \citep{huang,cnn,rgdst}. The norm-preserving property of an orthogonal linear operator helps to gain control over the Lipschitz constant of the deep architecture and, therefore, can enhance adversarial robustness of the model and its generalization capabilities both in theory and practice \citep{parseval}. Orthogonality is also useful when designing invertible constructions for flow-based generative modelling \citep{snf}.

Yet there is a lack of an orthogonal optimization method which is compatible with the industry-standard use of highly-parallel devices (GPU or TPU) for computations. Indeed, existing approaches for training an $N \times N$ orthogonal matrix can be grouped into two categories (see Table \ref{table:1}):
\begin{itemize}
    \item Algorithms involving expensive operation of $N \times N$-sized matrix inversion or exponent \citep{wisdom,cheap,helfrich} resulting in at least $O(N^2 \log N)$ parallel complexity \citep{par}.
    \item Algorithms decomposing the orthogonal operator into a set of $L < N$ linear operators applied sequentially \citep{eunn,hh}, not taking full advantage of parallel matrix multiplication on GPU and TPU \citep{parmatmul,par}, and resulting in at least $O (L)$ parallel complexity.
\end{itemize}
Hence, there is a critical gap, with no method which works when a) cubic time is prohibitive and b) large $L$ for non-cubic approaches is slow while small $L$ seriously restricts model capacity.

We present a new approach to optimization over orthogonal matrices, focusing on computational efficiency. We employ the compact WY (or CWY) transform, a scheme for the composition of several Householder reflections \citep{householder}. Our proposed approach has several advantages:

1. While in exact arithmetic being equivalent to decomposition into Householder reflections \citep{hh}, the parallel complexity of the algorithm \textbf{is only $\boldsymbol{O(\log (L N))}$} with $O(L^2 \log L)$ preprocessing (see Table \ref{table:1}) which makes it especially efficient when executed on GPU or TPU. We observe $20 \times$ speedup in practice compared to sequential Householder reflections \citep{hh} (see Table \ref{fig:hr-cwy-speed}) and \textbf{1-3 orders of magnitude speedups compared to matrix exponential and Cayley map} (Figure \ref{fig:timecomp}).



2. We introduce an extension for parametrizing Stiefel manifolds -- nonsquare generalizations of orthogonal matrices. The extension scheme, named ``Truncated CWY'' (or T-CWY), is to our knowledge \textbf{a novel parametrization of the Stiefel manifold which requires the smallest number of floating point operations (FLOPs)} among methods for Stiefel optimization (see Table \ref{table:2}).

3. Finally, we prove that SGD based on CWY or T-CWY leads to a gradient norm convergence to zero with $o(K^{-0.5 + \epsilon})$ rate for any $\epsilon > 0$ where $K$ is an iteration index.

We evaluate CWY on standard benchmarks (Copying task, Pixel-by-pixel MNIST) and neural machine translation. We evaluate T-CWY on the task of video prediction.
All theoretical results are proven in Appendix \ref{sec:proofs}.

\section{RELATED WORK} \label{sec:rwork}

We discuss orthogonality in the motivating example of RNN gradient explosion and vanishing. Then we review orthogonal optimization methods and their properties, summarized in Tables \ref{table:1} and \ref{table:2}.

\subsection{Gradient Explosion and Vanishing}

The rollout of a recurrent neural network (RNN) can be formalized as a series of computations \citep{jordan1990attractor}:
\begin{equation}
    y_t := W h_{t - 1} + b; \quad h_t := \sigma ( y_t + V x_t ); \label{eq:rnn}
\end{equation}
for $t = 1, \dots, T$. Here $x_1, \dots, x_T \in \mathbb{R}^K$ are the states of an observed sequence $X = \{ x_1, \dots, x_T \}$ from the training set, $h_0, \dots, h_T \in \mathbb{R}^N$ is a sequence of hidden states ($h_0$ is fixed and usually zero), $W \in \mathbb{R}^{N \times N}$ is a transition matrix, $b \in \mathbb{R}^N$ is a bias term, $V \in \mathbb{R}^{N \times K}$ is an input transformation matrix and $\sigma (\cdot)$ is an elementwise nonlinear function. $N$ and $K$ are the dimensions of the hidden and observed states respectively. In this work, we are interested in constraining $W$ to a restricted (orthogonal) form $Q$, which we shall make precise shortly. Let $C$ denote an objective function to minimize. For ease of illustration, we assume that $C$ is a function of the last hidden state: $C = C(h_T)$. Then one has the following expression for gradients w. r. t. intermediate hidden states:
\begin{equation*}
    \frac{\partial C}{\partial h_t} = \biggl( \prod_{k = t}^{T - 1} \frac{\partial h_{k + 1}}{\partial h_k} \biggr) \frac{\partial C}{\partial h_T} = \biggl( \prod_{k = t}^{T - 1} J_\sigma (h_k) W^\top \biggr) \frac{\partial C}{\partial h_T} ,
\end{equation*}
where $J_\sigma$ is the Jacobian of $\sigma(\cdot)$ applied elementwise. In practice, the expression leads to the hidden state norm increasing exponentially fast with $T - t$ when $\| W \|_2 = \sup_{\| h \|_2 = 1} \| W h \|_2 > 1$ (\textit{gradient explosion}) or decreasing exponentially fast when $\| W \|_2 < 1$ (\textit{gradient vanishing}). Both effects are undesirable as they lead to unstable learning and inability to capture long-term dependencies in the data.
To alleviate this problem, \citet{arjovsky} proposed using an orthogonal or unitary matrix $W$, that is to set either $W = Q \in \mathcal{O}(N)$ or $W = Q \in \mathcal{U}(N)$. Here $\mathcal{O} (N) = \{ Q \in \mathbb{R}^{N \times N} \, | \, Q^\top Q = I \}$ is called the \textit{orthogonal group}, $\mathcal{U}(N) = \{ Q \in \mathbb{C}^{N \times N} \, | \, Q^H Q = I \}$ is called the \textit{unitary group}, $Q^H$ denotes the conjugate transpose and $I$ denotes an identity matrix, with shape inferred from the context. Since orthogonal or unitary linear operators are $l_2$-norm preserving (i.e. $\forall h: \| Q h \|_2 = \| h \|_2$), the norm of the intermediate state gradient is approximately constant when $J_\sigma (h_k) \approx I$. Next we discuss approaches to tackle the constrained optimization problem formulated as
\begin{equation} \label{eq:oc}
    \min_{W, V, b} C \quad \text{s.t. } W=Q \in \mathcal{O} (N) \quad (\text{or } Q \in \mathcal{U} (N) ) .
\end{equation}

\subsection{Orthogonal Optimization}

We review two families of earlier methods to solve the constrained optimization problem (\ref{eq:oc}).

\subsubsection{Parametrization}

This is a family of methods constructing $Q$ as a function of unconstrained parameters, on which standard gradient descent can be performed.

\textbf{URNN} (Unitary Recurrent Neural Network, \citealp{arjovsky}) expresses Q as $D^{(3)} H^{(2)} F^{-1} D^{(2)} \Pi H^{(1)} F D^{(1)}$, where $D^{(1)}, D^{(2)}, D^{(3)}$ are parametrized diagonal unitary matrices, $H^{(1)}, H^{(2)}$ are parametrized Householder reflections (\citep{householder}, see the definition below), $F$ is a discrete Fourier transform matrix and $\Pi$ is a random permutation matrix.

\textbf{EURNN} (Efficient Unitary RNN, \citealp{eunn}) parametrizes $Q = D F^{(1)} F^{(2)} \dots F^{(L)} \in \mathcal{U} (N)$ where $L \leq N$, $D$ is diagonal unitary and $F^{(i)} \in \mathbb{C}^{N \times N}$ are permuted block-diagonal with $2 \times 2$ blocks.

\textbf{HR} (Householder reflections, \citealp{hh}) decomposes $Q = H(v^{(1)}) \dots H(v^{(L)}) \in \mathcal{O}(N)$ where for each nonzero $v \in \mathbb{R}^N$,  $H (v) = I - 2  v v^\top / \| v \|_2^2 \in \mathcal{O}(N)$ is a \textit{Householder reflection}.

\textbf{EXPRNN} (Exponent RNN, \citealp{cheap}). 
This method takes advantage of the fact that the matrix exponent $\exp (A)$ 
is a surjective mapping from the set of skew-symmetric matrices $\text{Skew}(N) = \{ A \in \mathbb{R}^{N \times N} \, | \, A = -A^\top \}$ to the \textit{special orthogonal group} $\mathcal{O}^{+1}(N)$, where for $s = \pm 1$ we define $\mathcal{O}^s (N) = \{ Q \in \mathcal{O}(N) \, | \, \det Q = s \}$. Notice that $\mathcal{O}(N) = \mathcal{O}^{+1}(N) \cup \mathcal{O}^{-1}(N)$.

\textbf{SCORNN} (Skew Cayley, \citealp{helfrich}) uses the \textit{Cayley transform} instead of matrix exponent: $Q = \text{Cayley} (A) = (I + A/2)^{-1} (I - A/2)$ which is a bijective map from $\text{Skew}(N)$ to $\mathcal{O}^{+1}(N) \setminus \Theta$ where $\Theta$ is a set of matrices with $-1$ eigenvalue. To cover all matrices from $\mathcal{O} (N)$, $Q$ is scaled as $\tilde{Q} = Q \tilde{D}$ where $\tilde{D}$ is a diagonal matrix with $\pm 1$ values. The number of $-1$'s in $\tilde{D}$ is a hyperparameter, which requires an additional search method. For fair comparison, we fix $\tilde{D} = I$.

\textbf{OWN} (Orthogonal Weight Normalization, \citealp{huang}). 
This method considers the more general task of optimizing a function over the \textit{Stiefel manifold} $\text{St} (N, M) = \{ \Omega \in \mathbb{R}^{N \times M} \, | \, \Omega^\top \Omega = I \}$ where $M \leq N$,  which generalizes the set $\mathcal{O}(N)$.
$\Omega$ 
is set as $\Omega = \tilde{V} P \Lambda^{-1/2} P^\top$, $\tilde{V} = (V - \frac{1}{N} \mathbf{1} \mathbf{1}^\top V)$ where $P \Lambda P^\top$ is an eigendecomposition of matrix $\tilde{V}^\top \tilde{V} \in \mathbb{R}^{M \times M}$ and $\mathbf{1}$ is the all-ones $N$-vector.

\subsubsection{Riemannian Gradient Descent (RGD)}

These methods instead consider gradient descent directly on the Stiefel manifold. Rather than  ``straight-line'' steps as in typical gradient descent, RGD goes along a curve which a) lies in $\text{St} (N, M)$ and b) points in the direction of fastest descent along the manifold. More precisely, RGD starts with a predefined matrix $\Omega^{(0)} \in \text{St} (N, M)$ and makes sequential updates of the type $\Omega^{(k)} :=  g_k (\eta_k)$ where $\eta_k$ is a step size, $g_k: \mathbb{R} \to \text{St} (N, M)$, $g_k(0) = \Omega^{(k - 1)}$ and $g_k'(0)$ is the gradient $\frac{\partial f}{\partial \Omega} (\Omega^{(k - 1)})$ projected onto the \textit{tangent space} $\mathcal{T}_{\Omega^{(k - 1)}}$ -- a linear space approximating the Stiefel manifold $\text{St} (N, M)$ at the point $\Omega^{(k - 1)}$. It is known that $\mathcal{T}_\Omega = \{ Z \in \mathbb{R}^{N \times M} \, | \, Z^\top \Omega \in \text{Skew} (M) \}$.
For a rigorous introduction to Riemannian manifolds and Riemannian Gradient Descent see \citep{absil}.

In a Riemannian manifold, the tangent space $\mathcal{T}_\Omega$ must have an inner product, usually chosen as either the \textit{canonical inner product} $\langle Z_1, Z_2 \rangle_1 = \text{Tr} (Z_1^\top (I - \frac{1}{2} \Omega \Omega^\top) Z_2)$ or \textit{Euclidean inner product} $\langle Z_1, Z_2 \rangle_2 = \text{Tr} ( Z_1^\top Z_2 )$. Consequently, the projection of the gradient has the form: $g'_k (0) = A^{(k - 1)} \Omega^{(k - 1)}$, $\quad A^{(k - 1)} = \widehat{A}_i^{(k - 1)} - \widehat{A}_i^{(k - 1) \top}$ where $\widehat{A}_1^{(k - 1)} = \frac{\partial f}{\partial \Omega} (\Omega^{(k - 1)}) \Omega^{(k - 1)\top}$ corresponds to the canonical inner product choice, and $\widehat{A}_2^{(k - 1)} = \widehat{A}_1^{(k - 1)} - \frac{1}{2} \Omega^{(k - 1)} \Omega^{(k - 1)\top} \widehat{A}_1^{(k - 1)}$ corresponds to the Euclidean inner product choice. Next, there is freedom in choosing the type of $g_k (\eta)$ function. Two popular choices are 1) \textit{Cayley retraction} $g_k^\text{Cay} (\eta) = \text{Cayley} (\eta A^{(k - 1)}) \Omega^{(k - 1)}$ and 2) \textit{QR-decomposition retraction} $g_k^\text{QR} (\eta) = \mathrm{qf} (\eta A^{(k - 1)} \Omega^{(k - 1)})$ where $\mathrm{qf} (\cdot)$ denotes a Q matrix of the argument's QR decomposition so that diagonal elements of the R matrix are positive. \citet{wisdom,rgdst} evaluate performance of RGD in the context of deep learning.

\subsection{Runtime Complexity}

\begin{table*}[t]
\caption{Comparison of runtime complexity required for a forward pass through RNN. To report parallel complexity we use that a) a product of $d_1 \times d_2$ and $d_2 \times d_3$-sized matrix takes $O(\log (d_1 d_2 d_3))$ time (distribution over $O (d_1 d_2 d_3)$ processes) \citep{parmatmul} and b) finding an inverse of $d_1 \times d_1$-sized matrix takes $d_1^2 \log d_1$ time (distribution over $O(d_1)$ processes) \citep{par}. All complexities are in $O(\cdot)$ notation, terms related to $V x_t$ computation are omited (serial $T K N$ and parallel $T \log (K N)$ additional term). The \textit{Cheap Gradient Principle} \citep{autograd} states that serial complexity of the backward pass coincides with that of the forward pass (can be extended to parallel complexity, see \citealp{parautograd1,parautograd2}).}
\label{table:1}
\begin{center}
\begin{tabular}{llll}
\textbf{METHOD} & \textbf{SERIAL TIME} & \textbf{PARALLEL TIME} & \textbf{SOLUTION DOMAIN} \\
\hline
RNN & $T N^2$ & $T \log N$ & --- \\
URNN & $T N \log N$ & $T N \log N$ & $\mathcal{U}(N)$'s subset \\
SCORNN & $T N^2 + N^3$ & $T \log N + N^2 \log N$ & $\mathcal{O}^{+1} (N) \setminus \Theta$ \\
RGD for $\mathcal{U}(N)$ & $T N^2 + N^3$ & $T \log N + N^2 \log N$ & $\mathcal{U} (N)$ \\
EXPRNN & $T N^2 + N^3$ & $T \log N + N^3$ & $\mathcal{O}^{+1} (N)$ \\
EURNN, $L$ iter. & $T L N$ & $T L$ & $\mathcal{U}(N)$ when $L = N$ \\
HR, $L$ refl. & $T L N$ & $T L \log N$ & $\mathcal{O}_L (N)$ \\
CWY, $L$ refl. (ours) & $T L N + L^2 N + L^3$ & $T \log ( L N ) + L^2 \log L$ & $\mathcal{O}_L (N)$ \\
\end{tabular}
\end{center}
\end{table*}

\begin{table*}[t]
\caption{Complexity of performing a gradient step when optimizing over $\Omega \in \text{St} (N, M)$. In the notation ``RGD-A-B'' ``A'' is C or E for canonical or Euclidean inner product choice respectively, and ``B'' is C or QR for Cayley or QR retraction respectively. The term related to computing the objective function and $\Omega$'s gradient is omitted. Parallel complexity is reported in $O(\cdot)$ notation while FLOPs are reported for the forward pass with exact constants in the leading terms. The backward pass requires only a constant time more operations (the \textit{Cheap Gradient Principle}, \citealp{autograd}). To report parallel complexity we use the same assumptions as for Table \ref{table:1}. In our estimations we use that a) a product of $d_1 \times d_2$ and $d_2 \times d_3$-sized matrix takes $2 d_1 d_2 d_3$ FLOPs \citep{hunger}, b) an inverse of $d_1 \times d_1$-sized dense and upper-triangular matrix takes $d_1^3$ and $d_1^3 / 3$ FLOPs respectively \citep{hunger}, c) QR decomposition of a $d_1 \times d_2$-sized matrix, $d_1 \geq d_2$, takes $2 d_2^2 (d_1 - \frac13 d_2)$ FLOPs \citep{qrflops} and d) eigendecomposition of a $d_1 \times d_1$-sized positive semi-definite matrix (as it is in OWN) coincides with its SVD which requires $\frac83 d_1^3$ FLOPs \citep{trefethen}. Since $N \geq M$, T-CWY needs the smallest number of FLOPs.}
\label{table:2}
\begin{center}
\begin{tabular}{llll}
\textbf{APPROACH} & \textbf{PARALLEL TIME} & \textbf{INVERTED MATRIX SIZE} & \textbf{FLOPs} \\
\hline
RGD-C-QR & $M \log (M N)$ & --- & $10 N M^2 - 2 M^3 / 3$ \\
RGD-E-QR & $M \log (M N)$ & --- & $14 N M^2 - 2 M^3 / 3$ \\
RGD-C-C & $\log (M N) + M^2 \log M$ & $2 M \times 2 M$ & $28 N M^2 + 16 M^3$ \\
RGD-E-C & $\log (M N) + M^2 \log M$ & $3 M \times 3 M$ & $72 N M^2 + 25 M^3$ \\
OWN & $\log (M N) + M^3$ & --- & $4 N M^2 + 14 M^3 / 3$ \\
T-CWY (ours) & $\log (M N) + M^2 \log M$ & $M \times M$ upper-triangular & $\boldsymbol{4 N M^2 + 7 M^3 / 3}$ \\
\end{tabular}
\end{center}
\end{table*}

We compare the serial and parallel runtime complexity of different methods to train orthogonal RNNs in Table \ref{table:1} (we introduce the notation $\mathcal{O}_L (N)$ later in this section). We also show the 
domain covered by each optimization approach.

Row ``RNN" indicates the complexity of an unconstrained RNN. \citet{hh} show that any RNN with a unitary transition matrix can be modelled by a different network with orthogonal weights. Hence, we opt for simplification by only covering the orthogonal group $\mathcal{O} (N)$. As noted by \citep{wisdom}, URNN parametrization is not enough to cover all matrices from $\mathcal{U}(N)$, which is an $N^2$-dimensional manifold.

RGD, SCORNN and EXPRNN employ a costly $O( N^3 )$ operation of matrix exponent or Cayley transform. Note that the limitation of EXPRNN covering only $\mathcal{O}^{+1} (N)$ can be alleviated, since a matrix $Q \in \mathcal{O}^s (N)$ can be parametrized by $\widehat{Q} \in \mathcal{O}^{-s} (N)$ obtained by inverting one of $Q$'s rows.

EURNN enables a tradeoff between computational complexity and unitary matrix coverage. Matrix-vector product with $F^{(i)}$ can be efficiently computed in serial time $O (N)$ (parallel $O(1)$). Next, by choosing bigger $L$, we can increase the family of supported unitary matrices at the cost of additional computation time. Eventually, when $L = N$, all unitary matrices are covered. Similar properties hold for HR decomposition -- applying a Householder reflection to a vector is an $O (N)$ (parallel $O(\log N)$) operation and the following theorem holds:
\begin{theorem}[adapted from \citealp{hh}] \label{lemma:1}
Let $Q \in \mathcal{O}^s(N)$ where $s = (-1)^N$. Then there exist nonzero $v^{(1)}, \dots, v^{(N)} \in \mathbb{R}^N$ s.t. $Q = H(v^{(1)}) \dots H(v^{(N)})$.
\end{theorem}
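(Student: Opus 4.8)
The plan is to prove the statement by induction on $N$, using one Householder reflection at a time to peel off a coordinate direction, and then to correct the \emph{number} of reflections by a parity argument. First I would record two elementary facts about $H(\cdot)$: every Householder reflection is an involution with $\det H(v) = -1$ (its $-1$-eigenspace is $\mathrm{span}(v)$ and its $+1$-eigenspace is $v^\perp$), and for any two distinct unit vectors $a \neq b$ with $\|a\|_2 = \|b\|_2 = 1$ one has $H(a - b)\, a = b$, as follows from a direct computation of $v^\top a / \|v\|_2^2$ with $v = a - b$. These are the only computational ingredients.

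For the inductive reduction, let $Q \in \mathcal{O}^s(N)$ with $s = (-1)^N$, write $e_1$ for the first standard basis vector, and set $q := Q e_1$, a unit vector. In the generic case $q \neq e_1$, put $v := q - e_1 \neq 0$; then $H(v) q = e_1$, so $H(v) Q$ is orthogonal with first column $e_1$. Orthogonality of its columns forces the remaining entries of its first row to vanish as well, giving the block form $H(v) Q = \mathrm{diag}(1, Q')$ with $Q' \in \mathcal{O}(N - 1)$. Taking determinants and using $\det H(v) = -1$ shows $\det Q' = (-1)^{N+1} = (-1)^{N-1}$, so the inductive hypothesis applies to $Q'$ and yields $N - 1$ reflections $H(w^{(i)})$, $w^{(i)} \in \mathbb{R}^{N-1}$. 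Embedding each as $\widetilde{w}^{(i)} := (0, w^{(i)\top})^\top \in \mathbb{R}^N$ gives $H(\widetilde{w}^{(i)}) = \mathrm{diag}(1, H(w^{(i)}))$, and since $H(v)$ is an involution we recover $Q = H(v)\, H(\widetilde{w}^{(1)}) \cdots H(\widetilde{w}^{(N-1)})$, a product of exactly $N$ reflections.

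The hard part will be the degenerate situations in which this clean recursion produces fewer than $N$ reflections: for instance when $q = e_1$ already (the sign of $\det Q'$ then fails to match the inductive hypothesis at dimension $N-1$), or at the base case where $Q = [1] \in \mathcal{O}^{+1}(1)$ needs no reflection at all. Rather than patch each case separately, I would decouple the two concerns. First I would establish, by the reduction above, that an arbitrary $Q \in \mathcal{O}(N)$ is a product of \emph{at most} $N$ Householder reflections; then I would observe that if such a decomposition uses $k$ reflections then $(-1)^k = \det Q = (-1)^N$, so $k \equiv N \pmod 2$ and $N - k$ is a nonnegative even integer. Finally I would pad the product with $(N-k)/2$ copies of the identity written as $H(u) H(u) = I$ for a fixed nonzero $u$, raising the count to exactly $N$ without changing the value. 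This last step is precisely where the hypothesis $s = (-1)^N$ is indispensable: it is exactly the parity condition making $N - k$ even (and conversely it is forced by $\det Q = (-1)^k$), so the constraint on $s$ is both necessary and the key that unlocks the padding argument.
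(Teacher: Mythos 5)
Your proposal is correct, and the inductive reduction in the generic case $q \neq e_1$ (reflect $q$ onto $e_1$, use orthogonality to kill the first row, track the determinant, embed the $(N-1)$-dimensional reflections with a leading zero) is exactly the paper's. Where you diverge is in handling the degenerate case $q = e_1$: the paper keeps the induction uniform by always spending exactly one reflection per step, choosing the reflection vector by cases --- in particular, when $q_1 = 1$ it takes $v = e^{(N)}$, whose reflection fixes the first column (since $e^{(N)} \perp e^{(1)}$) while still contributing a factor $-1$ to the determinant, so that $Q'$ lands in $\mathcal{O}^{(-1)^{N-1}}(N-1)$ and the hypothesis $s = (-1)^N$ propagates automatically. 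You instead prove the weaker statement that any $Q \in \mathcal{O}(N)$ is a product of $k \leq N$ reflections and then repair the count by padding with $(N-k)/2$ copies of $H(u)H(u) = I$, which is legitimate because $\det Q = (-1)^k = (-1)^N$ forces $N - k$ to be even. Both arguments are sound; the paper's buys a cleaner single induction with no separate counting step (and its explicit case formula for $v$ is reused verbatim in the proof of Theorem 3 and in the CWY initialization procedure), while yours cleanly separates the existence of a short decomposition from the parity bookkeeping and makes explicit why the hypothesis $s = (-1)^N$ is exactly the right condition.
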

Although EURNN and HR methods don't have an $O(N^3)$ term in runtime complexity, they cannot be parallelized in $L$, the number of sequentially applied operators $F^{(i)}$ or $H(v^{(i)})$. This becomes a problem when $N$ is big and, thus, bigger $L$ is needed to obtain good expressiveness. We use the notation $\mathcal{O}_L (N)$ for the set of orthogonal matrices which can be obtained with $L$ Householder reflections: $\mathcal{O}_L (N) = \{ H(v^{(1)}) \dots H(v^{(L)}) \, | \, \forall i:  v^{(i)} \in \mathbb{R}^N \setminus \{ \mathbf{0} \} \}$.

Table \ref{table:2} summarizes the runtime complexity of Stiefel manifold optimization approaches. OWN requires an eigenvalue decomposition of a dense $M \times M$-sized matrix which is a cubic operation. See Appendix Section \ref{sec:woodbury} for additional discussion of RGD-based methods' runtime complexity.

\section{EFFICIENT $\mathcal{O} (N)$ and $\mathrm{St} (N, M)$ PARAMETRIZATION} \label{sec:prop}

We define the CWY transform and demonstrate its utility for RNN training. Next, we introduce a novel T-CWY map, and for both transforms prove stochastic-optimization convergence guarantees.

\subsection{Compact WY (CWY) Transform}

We suggest an alternative algorithm to compute the composition of $L$ Householder reflections. Our approach can compute a series of reflections in parallel on GPU or TPU thus increasing the effectiveness of RNN rollout in terms of floating point operations per second (FLOPS). The approach is called the \textit{compact WY} (CWY) transform \citep{joffrain}, and to our knowledge, has not been applied previously in machine learning. \citet{hh} used CWY only for theoretical reasoning about backpropagation -- they used the explicit Householder series in experiments.
\begin{theorem}[adapted from \citealp{joffrain}] \label{lemma:2}
Let $v^{(1)}, \dots v^{(L)} \in \mathbb{R}^N$ be nonzero vectors. Then 
\begin{equation} \label{eq:dec}
    H(v^{(1)}) \dots H(v^{(L)}) = I - U S^{-1} U^\top ,
\end{equation}
where $U = \begin{bmatrix} v^{(1)}/\| v^{(1)} \|_2  \dots  v^{(L)}/\| v^{(L)} \|_2 \end{bmatrix} \in \mathbb{R}^{N \times L}$ , and $S = \frac{1}{2} I + \text{striu} (U^\top U)$ where $\text{striu} (\cdot)$ returns an argument matrix with all diagonal and lower-triangular elements zeroed out.
\end{theorem}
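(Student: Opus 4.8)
The plan is to prove the identity by induction on $L$, after first reducing to the case of unit vectors. Since $H(v)$ depends only on the direction of $v$, I may replace each $v^{(i)}$ by its normalization $u_i := v^{(i)}/\| v^{(i)} \|_2$, so that $H(v^{(i)}) = I - 2 u_i u_i^\top$ and $U = \begin{bmatrix} u_1 \dots u_L \end{bmatrix}$ has unit-norm columns. Write $U_k = \begin{bmatrix} u_1 \dots u_k \end{bmatrix}$, $S_k = \frac12 I + \text{striu}(U_k^\top U_k)$, and $P_k = H(u_1) \dots H(u_k)$; the goal is $P_L = I - U_L S_L^{-1} U_L^\top$. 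For the base case $L = 1$, the matrix $U_1^\top U_1 = 1$ has its only (diagonal) entry removed by $\text{striu}$, so $S_1 = \tfrac12$ and $I - U_1 S_1^{-1} U_1^\top = I - 2 u_1 u_1^\top = H(u_1)$, as required.

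The key structural observation driving the induction is that appending the last column $u_L$ to $U_{L-1}$ turns $S_L$ into a block upper-triangular matrix. Writing $w := U_{L-1}^\top u_L$, the Gram matrix $U_L^\top U_L$ splits into a $2 \times 2$ block form whose top-left block is $U_{L-1}^\top U_{L-1}$, whose top-right block is $w$, and whose bottom-right entry is $1$; applying $\text{striu}$ keeps $w$ (strictly above the diagonal) but kills the diagonal and everything below it, giving
\[ S_L = \begin{bmatrix} S_{L-1} & w \\ 0 & \tfrac12 \end{bmatrix} . \]
In particular $S_L$ is upper-triangular with every diagonal entry equal to $\tfrac12$, so $\det S_L = 2^{-L} \neq 0$ and each $S_k$ is invertible, which is what justifies writing $S^{-1}$ in the statement. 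The standard block-triangular inverse formula then yields
\[ S_L^{-1} = \begin{bmatrix} S_{L-1}^{-1} & -2 S_{L-1}^{-1} w \\ 0 & 2 \end{bmatrix} . \]

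It then remains to expand $P_L = P_{L-1} H(u_L)$, substitute the induction hypothesis $P_{L-1} = I - U_{L-1} S_{L-1}^{-1} U_{L-1}^\top$, and check that the result equals $I - U_L S_L^{-1} U_L^\top$. Carrying out the block product $U_L S_L^{-1} U_L^\top$ with the formula above produces exactly the three terms $U_{L-1} S_{L-1}^{-1} U_{L-1}^\top - 2 U_{L-1} S_{L-1}^{-1} U_{L-1}^\top u_L u_L^\top + 2 u_L u_L^\top$ that arise from expanding $(I - U_{L-1} S_{L-1}^{-1} U_{L-1}^\top)(I - 2 u_L u_L^\top)$, once one uses $w = U_{L-1}^\top u_L$ to identify the cross terms. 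The only real obstacle is bookkeeping: getting the $\text{striu}$ block decomposition right (in particular that the off-diagonal block is $w$ and not $w^\top$, a consequence of $\text{striu}$ retaining the \emph{upper} triangle) and matching cross terms after the block-inverse substitution. None of these steps is deep, but a stray sign or transpose in the block structure would break the match, so I would organize the writeup around the two displayed block identities for $S_L$ and $S_L^{-1}$ and let the term-by-term comparison fall out.
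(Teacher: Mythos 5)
Your proposal is correct and follows essentially the same route as the paper: induction on $L$, peeling off the last reflection, and identifying the resulting block upper-triangular structure of $S$ (the paper verifies the block inverse by multiplying against $S$ to get $I$, while you quote the block-triangular inverse formula directly — the same computation). The base case, the identification $w = U_{L-1}^\top u_L$ as the strictly-upper off-diagonal block, and the term-by-term match with $(I - U_{L-1}S_{L-1}^{-1}U_{L-1}^\top)(I - 2u_Lu_L^\top)$ all check out.
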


We store $v^{(1)}, \dots, v^{(L)}$ as learnable parameters. An efficient way to do a forward pass with CWY-based RNN is as follows. We don't compute and store $Q = I - U S^{-1} U^\top$ explicitly. Instead, before each RNN rollout, we precompute $U$ and $S^{-1}$ and expand Equation (\ref{eq:rnn}, left) into the following computations: $u_t := U^\top h_{t - 1}$, $v_t := S^{-1} u_t$, $y_t := h_{t - 1} - U v_t + b$, which has two matrix-vector products with matrices of size $L \times N$ and $N \times L$. Altogether this results in the complexity estimate shown in Table \ref{table:1}. The latter approach is asymptotically efficient when $L < N$, while when $L = N$ we precompute the transition matrix (\ref{eq:dec}) into $Q$ and then perform the RNN rollout as usual.

The better parallelization pattern of CWY comes with a price of an $L^2 \log L$ term related to inverting the $S$ matrix. In practice, we find that for moderate $L$ this addition is comparable to the rollout cost, considering also that $S$ is upper-triangular and, hence, takes less FLOPs to invert \citep{hunger}. 

\subsection{Extension: Truncated CWY (T-CWY)}

We extend our approach and propose, to our knowledge, a novel parametrization of the Stiefel manifold $\text{St}(N, M)$ which we call the \textit{truncated CWY} (T-CWY) transform. We parametrize the Stiefel manifold $\text{St}(N, M)$ with $M < N$ by $\mathbb{R}^{N \times M}$ minus a zero-measure set.
\begin{theorem} \label{lemma:3}
Consider $M < N$ and a function $\gamma_{N,M}: (\mathbb{R}^N \setminus \{ \mathbf{0} \})^M \to \mathbb{R}^{N \times M}$ defined as follows. For $v^{(1)}, \dots v^{(M)} \in \mathbb{R}^N$ construct a matrix $U = \begin{bmatrix} v^{(1)}/\| v^{(1)} \|_2 & \dots & v^{(M)}/\| v^{(M)} \|_2 \end{bmatrix} \in \mathbb{R}^{N \times M}$ and assign $\gamma_{N,M} (v^{(1)}, \dots v^{(M)}) = \begin{bmatrix} I & \mathbf{0} \end{bmatrix}^\top - U S^{-1} U_1^\top \in \mathbb{R}^{N \times M}$ where $U_1$ is an upper $M \times M$ submatrix of $U$ and $S = \frac{1}{2} I + \text{striu} (U^\top U)$.
Then $\gamma_{N,M}$ is a surjective mapping to $\text{St}(N, M)$.
\end{theorem}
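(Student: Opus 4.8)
The plan is to split the claim into (i) showing $\gamma_{N,M}$ lands in $\mathrm{St}(N,M)$ and (ii) showing every $\Omega\in\mathrm{St}(N,M)$ is attained. Both parts rest on a single observation: $\gamma_{N,M}$ merely extracts the first $M$ columns of the full CWY matrix. Writing $E=\begin{bmatrix} I & \mathbf{0}\end{bmatrix}^\top\in\mathbb{R}^{N\times M}$ (identity on top, zeros below), one has $U^\top E=U_1^\top$ and $IE=E$, so Theorem~\ref{lemma:2} gives
\[
\gamma_{N,M}(v^{(1)},\dots,v^{(M)}) = (I-US^{-1}U^\top)E = QE ,
\]
where $Q:=H(v^{(1)})\cdots H(v^{(M)})\in\mathcal{O}(N)$. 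The map is well-defined on the whole domain because $S$ is upper-triangular with constant diagonal $\tfrac12$, hence always invertible, and nonzeroness of the $v^{(i)}$ is exactly the domain hypothesis. Part (i) is then immediate: since $Q$ is orthogonal and $E^\top E=I_M$, the output satisfies $\Omega^\top\Omega=E^\top Q^\top QE=E^\top E=I_M$.

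For part (ii) I would reduce surjectivity to a Householder triangularization statement. Because each $H(v)$ is a symmetric involution, $\gamma_{N,M}(v^{(1)},\dots,v^{(M)})=\Omega$ is equivalent to $H(v^{(M)})\cdots H(v^{(1)})\,\Omega=E$. Thus it suffices to show that any $\Omega\in\mathrm{St}(N,M)$ can be reduced to $E$ by exactly $M$ \emph{nonzero} left Householder reflections. I would construct these greedily, column by column, maintaining the invariant that after $k-1$ steps the first $k-1$ columns equal $e_1,\dots,e_{k-1}$ while the remaining columns are orthonormal and lie in $\mathrm{span}(e_k,\dots,e_N)$; this persists because orthogonal maps preserve orthonormality. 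At step $k$, with current $k$-th column $c_k\in\mathrm{span}(e_k,\dots,e_N)$, I set $v^{(k)}=c_k-e_k$. The elementary identity $H(a-b)a=b$ for distinct unit vectors then yields $H(v^{(k)})c_k=e_k$ exactly — mapping to $+e_k$ and so sidestepping any residual diagonal sign matrix — while $v^{(k)}\in\mathrm{span}(e_k,\dots,e_N)$ keeps $e_1,\dots,e_{k-1}$ fixed and preserves the invariant.

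The main obstacle is the degenerate case $c_k=e_k$, where $v^{(k)}=c_k-e_k=0$ is not a valid reflection vector; this is precisely where the hypothesis $M<N$ is used. Since $k\le M<N$, the subspace $\mathrm{span}(e_{k+1},\dots,e_N)$ is nonzero, so I can take any nonzero $v^{(k)}$ there: the resulting $H(v^{(k)})$ fixes $e_1,\dots,e_k$ (in particular leaves the already-aligned column $c_k=e_k$ in place) and only reflects the later columns, which stay orthonormal in $\mathrm{span}(e_{k+1},\dots,e_N)$ and are handled by subsequent steps. In both cases each step consumes exactly one nonzero reflection, so after $M$ steps $H(v^{(M)})\cdots H(v^{(1)})\,\Omega=E$ with all $v^{(k)}\ne\mathbf{0}$, establishing that $\Omega$ lies in the image. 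To finish I would verify the boundary step $k=M$ (the active subspace still has dimension $N-M+1\ge 2$, so both constructions apply) and confirm that the ordering of the produced $v^{(k)}$ matches the product $Q=H(v^{(1)})\cdots H(v^{(M)})$.
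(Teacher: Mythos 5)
Your proposal is correct and follows essentially the same route as the paper: both establish surjectivity by reducing an arbitrary $\Omega\in\mathrm{St}(N,M)$ to $\begin{bmatrix} I & \mathbf{0}\end{bmatrix}^\top$ with $M$ nonzero Householder reflections (using $M<N$ to supply a nonzero reflection vector in the trailing coordinates when a column is already aligned), and then invoke Theorem~\ref{lemma:2}. The only cosmetic difference is in the membership part, where you deduce $\Omega^\top\Omega=I$ from $\gamma_{N,M}=QE$ with $Q$ orthogonal rather than by the paper's direct algebraic computation via $S+S^\top=U^\top U$; both are valid.
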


In other words, Theorem \ref{lemma:3} states that Stiefel matrices can be parametrized by taking $M$ first columns of a  $N \times N$ CWY-parametrized matrix with $L = M$, but without forming this $N \times N$ matrix explicitly. Computational complexity of T-CWY is indicated in Table \ref{table:2}. T-CWY is fully-parallelizable in $N$ with the number of floating point operations smaller than for any other approach due to the inverted matrix $S$ size $M \times M$ and upper-triangular structure \citep{hunger}.

\subsection{SGD Convergence Analysis}

Consider a function $f: \mathcal{O} (N) \to \mathbb{R}$ (e. g. an empirical risk) which is accessed through its stochastic proxy $\widetilde{f}$ (e. g. a minibatch loss). We prove a standard result \citep{srgd,bottou} stating that CWY-based stochastic optimization can get arbitrarily 
close to a stationary point where $\nabla f = \mathbf{0}$. For convenience we formulate our results in terms of a Householder decomposition which is equivalent to CWY.

\begin{theorem} \label{th:conv}
Let $f: \mathbb{R}^{N \times N} \to \mathbb{R}$ be a differentiable function with Lipschitz-continuous gradients on $\mathcal{O} (N)$: $\forall X', X'' \in \mathcal{O} (N): \| \nabla f (X') - \nabla f (X'') \|_F \leq M_1 \| X' - X'' \|_F$ for some $M_1 > 0$ ($\| \cdot \|_F$ denotes Frobenius norm). Let $\widetilde{f}: \mathbb{R}^{N \times N} \to \mathbb{R}$ be a stochastic differentiable function such that $\forall X \in \mathcal{O} (N): \mathbb{E} \nabla \widetilde{f} (X) = \nabla f (X)$ and suppose there exists $M_2 > 0$ such that $\forall X \in \mathcal{O} (N): \mathbb{E} \| \nabla \widetilde{f} (X) \|_F^2 \leq M_2$. Consider a sequence $\{ (v^{(k,1)} \in \mathbb{R}^N, \dots, v^{(k,L)} \in \mathbb{R}^N) \}_{k = 0}^\infty$ where $v^{(0,1)}, \dots, v^{(0,L)} \in \mathbb{R}^N$ are deterministic and nonzero and for all $k > 0, 1 \leq l \leq L$: $v^{(k,l)} = v^{(k - 1,l)} - k^{-0.5} \nabla_{v^{(k - 1,l)}} \widetilde{f} (H(v^{(k - 1,1)}) \dots H(v^{(k - 1,L)}))$.
Then all $\{ v^{(k,l)} \}$ are well-defined and for any $\epsilon > 0$,
\begin{gather*}
    \min_{0 \leq k' < K} \sum_{l = 1}^L \mathbb{E} \| \nabla_{v^{(k',l)}} f (H(v^{(k',1)}) \times \dots \\
    \times H(v^{(k',L)})) \|_2^2 = o (K^{-0.5 + \epsilon}) .
\end{gather*}
\end{theorem}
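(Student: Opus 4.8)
The plan is to recast the stated update as ordinary nonconvex SGD on the composite objective $\phi(v) := f(Q(v))$, where $v = (v^{(1)},\dots,v^{(L)})$ and $Q(v) := H(v^{(1)})\cdots H(v^{(L)})$, and then invoke the classical telescoping argument of \citep{bottou}. Writing $g_k := \nabla_v\widetilde f(Q(v^{(k-1,\cdot)}))$ for the stochastic gradient, the quantity in the statement is exactly $\sum_{l}\|\nabla_{v^{(k',l)}}f\|_2^2 = \|\nabla\phi(v^{(k',\cdot)})\|_2^2$, so the goal reduces to bounding an $\eta$-weighted average of $\mathbb{E}\|\nabla\phi\|_2^2$. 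The single structural fact that drives everything is scale-invariance: since $H(cv)=H(v)$ for $c\neq 0$, both $\phi$ and its stochastic proxy are invariant under rescaling each block $v^{(l)}$, hence $\nabla_{v^{(l)}}\widetilde f\perp v^{(l)}$ and $\nabla_{v^{(l)}}f\perp v^{(l)}$ pointwise. Consequently each SGD step is orthogonal to the current iterate in every block, so $\|v^{(k,l)}\|_2^2 = \|v^{(k-1,l)}\|_2^2 + k^{-1}\|\nabla_{v^{(k-1,l)}}\widetilde f\|_2^2 \geq \|v^{(k-1,l)}\|_2^2$. Thus the block norms are non-decreasing and remain $\geq \|v^{(0,l)}\|_2 > 0$: the iterates never reach the singular set where $H$ is undefined (establishing well-definedness), and — because the step is orthogonal — the whole segment joining $v^{(k-1,\cdot)}$ to $v^{(k,\cdot)}$ stays in the safe region $R := \{\,\|v^{(l)}\|_2\geq\|v^{(0,l)}\|_2\ \forall l\,\}$.

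Next I would establish a uniform smoothness constant valid on $R$. The map $v\mapsto Q(v)$ factors through the normalizations $u^{(l)}=v^{(l)}/\|v^{(l)}\|_2$, on which $Q$ is a genuinely smooth (rational) function; on $R$ the normalization and its derivative are bounded and Lipschitz with constants controlled by $\min_l\|v^{(0,l)}\|_2$. Combining this with the assumed boundedness of $\nabla f$ on the compact set $\mathcal{O}(N)$ and its $M_1$-Lipschitzness, a chain-rule decomposition $\nabla\phi(v)=(DQ(v))^{*}\nabla f(Q(v))$ yields $\|\nabla\phi(a)-\nabla\phi(b)\|_2\leq L_\phi\|a-b\|_2$ for $a,b$ on any SGD segment, with $L_\phi$ depending only on $N,L,M_1,\sup_{\mathcal{O}(N)}\|\nabla f\|$ and $\min_l\|v^{(0,l)}\|_2$. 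I would carry out this estimate at the level of first derivatives rather than through a Hessian, since $f$ is assumed only $C^{1,1}$. The identical bounds give $\mathbb{E}[\|g_k\|_2^2\mid\mathcal{F}_{k-1}]\leq \sigma^2$ uniformly, using $\|DQ(v^{(k-1,\cdot)})\|=O(1/\min_l\|v^{(0,l)}\|_2)$ together with $\mathbb{E}\|\nabla\widetilde f\|_F^2\leq M_2$.

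With these ingredients the remainder is routine: apply the descent inequality along each (safe) segment, take conditional expectations using $\mathbb{E}[g_k\mid\mathcal{F}_{k-1}]=\nabla\phi(v^{(k-1,\cdot)})$ (unbiasedness composed with the deterministic chain rule), telescope over $k=1,\dots,K$, and use $\phi\geq\inf_{\mathcal{O}(N)}f>-\infty$ to get $\sum_{k=1}^{K}\eta_k\,\mathbb{E}\|\nabla\phi(v^{(k-1,\cdot)})\|_2^2\leq \phi(v^{(0,\cdot)})-\inf f+\tfrac{L_\phi\sigma^2}{2}\sum_{k=1}^K\eta_k^2$. For $\eta_k=k^{-1/2}$ one has $\sum_{k=1}^K\eta_k^2=O(\log K)$ and $\sum_{k=1}^K\eta_k=\Theta(K^{1/2})$, so bounding the minimum by the $\eta$-weighted average gives $\min_{0\le k'<K}\mathbb{E}\|\nabla\phi(v^{(k',\cdot)})\|_2^2=O(K^{-1/2}\log K)=o(K^{-1/2+\epsilon})$ for every $\epsilon>0$, which is the claim.

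I expect the main obstacle to be precisely the failure of \emph{global} Lipschitzness of $\nabla\phi$: the reflection parametrization is scale-invariant, so $\nabla\phi$ blows up near the singular set $\{v^{(l)}=\mathbf 0\}$ and flattens as $\|v^{(l)}\|_2\to\infty$, and no uniform $L_\phi$ exists on all of $(\mathbb{R}^N)^L$. The resolution, and the one delicate point, is the orthogonality observation: it simultaneously proves well-definedness, confines the iterates away from the singularity, and keeps every SGD segment inside $R$, where a single $L_\phi$ and a single $\sigma^2$ suffice for the standard analysis to go through.
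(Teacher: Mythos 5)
Your proposal is correct and follows essentially the same route as the paper: the scale-invariance of $H(\cdot)$ forcing each stochastic gradient block to be orthogonal to the current iterate (so block norms are non-decreasing, the iterates and the connecting segments stay in a region bounded away from the singular set, and well-definedness follows) is exactly the paper's Lemma \ref{lemma:def}; the first-derivative-level Lipschitz estimate on that region and the uniform second-moment bound are the paper's Lemmas \ref{lemma:gl} and \ref{lemma:varbnd}; and the concluding telescoping argument with $\sum_k \eta_k^2 = O(\log K)$ and $\sum_k \eta_k = \Theta(K^{1/2})$ is the same adaptation of Theorem 4.10 of \citet{bottou}. The only cosmetic difference is that the paper states its smoothness bound as a directional estimate $|\nabla h(\eta)-\nabla h(0)|\le \mathcal{C}\widetilde{M}\eta$ along each SGD segment rather than as a single Lipschitz constant for $\nabla\phi$ on all of the safe region, but the underlying estimates are the same.
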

Observe that an identical result holds for T-CWY parametrization. Indeed, using notation of Theorem \ref{lemma:3} for any $f: \text{St} (N, M) \to \mathbb{R}$ $f (\gamma_{N,M} (v^{(1)}, \dots, v^{(M)})) = f ((H (v^{(1)}) \dots H (v^{(M)}))_{:,:M})$. Gradient Lipschitz-continuity of $f$ and bounded variance of $\widetilde{f}$ hold for composite functions $f ((\cdot)_{:,:M})$ and $\widetilde{f} ((\cdot)_{:,:M})$ which are plugged into Theorem \ref{th:conv} to get analogous result for T-CWY. The proof of Theorem \ref{th:conv}, as well as a high-level sketch to help intuition, can be found in Appendix \ref{sec:th4pr}.

\subsection{Convolutional Non-Exploding Recurrent Unit (ConvNERU)}

Based on the proposed Stiefel matrix parametrization, we introduce 
a \textit{convolutional non-exploding recurrent unit} (ConvNERU) -- a recurrent module which is provably resistant to gradient and hidden state explosion. 
Given a sequence of images $X_1, \dots, X_T \in \mathbb{R}^{h \times w \times f_{in}}$, our proposed module is the following modification of (\ref{eq:rnn}): $Y_t := \mathcal{K} * G^{(t - 1)} + B, \quad G^{(t)} := \sigma( Y_t + \mathcal{K}^{in} * X_t )$ where $G^{(0)}, \dots, G^{(T)} \in \mathbb{R}^{h \times w \times f_{out}}$ are hidden states, $B \in \mathbb{R}^{h \times w \times f_{out}}$ is a bias tensor which is parametrized by $b \in \mathbb{R}^{f_{out}}$ so that $b = B_{i,j}$ for any $i, j$, $\sigma$ is an element-wise nonlinearity, ``$*$" denotes convolution operation and $\mathcal{K} \in \mathbb{R}^{q \times q \times f_{out} \times f_{out}}, \mathcal{K}^{in} \in \mathbb{R}^{q \times q \times f_{in} \times f_{out}}$ are convolution kernels with $q$ being kernel size. Denote by $\widehat{\mathcal{K}}$ a $(q^2 f_{out} \times f_{out})$-sized matrix such that for any $l, p \leq q$ and $i, j \leq f_{out}$ it holds that $\widehat{\mathcal{K}}_{l q f_{out} + p f_{out} + i,j} = \mathcal{K}_{l,p,i,j}$. We equip ConvNERU with a constraint $(q \widehat{\mathcal{K}}) \in \text{St} (q^2 f_{out}, f_{out})$ which is implemented by T-CWY parametrization. In Appendix Section \ref{sec:convneru}, we theoretically show that ConvNERU is resistant to  norm explosion.

\section{EXPERIMENTS} \label{sec:exp}

We evaluate CWY on standard benchmarks and a neural machine translation setup. Then, we evaluate T-CWY and ConvNERU on a video prediction setup.

\begin{figure*}[!ht]

\begin{subfigure}[b]{0.38\textwidth}
  \centering
  \includegraphics[width=\linewidth]{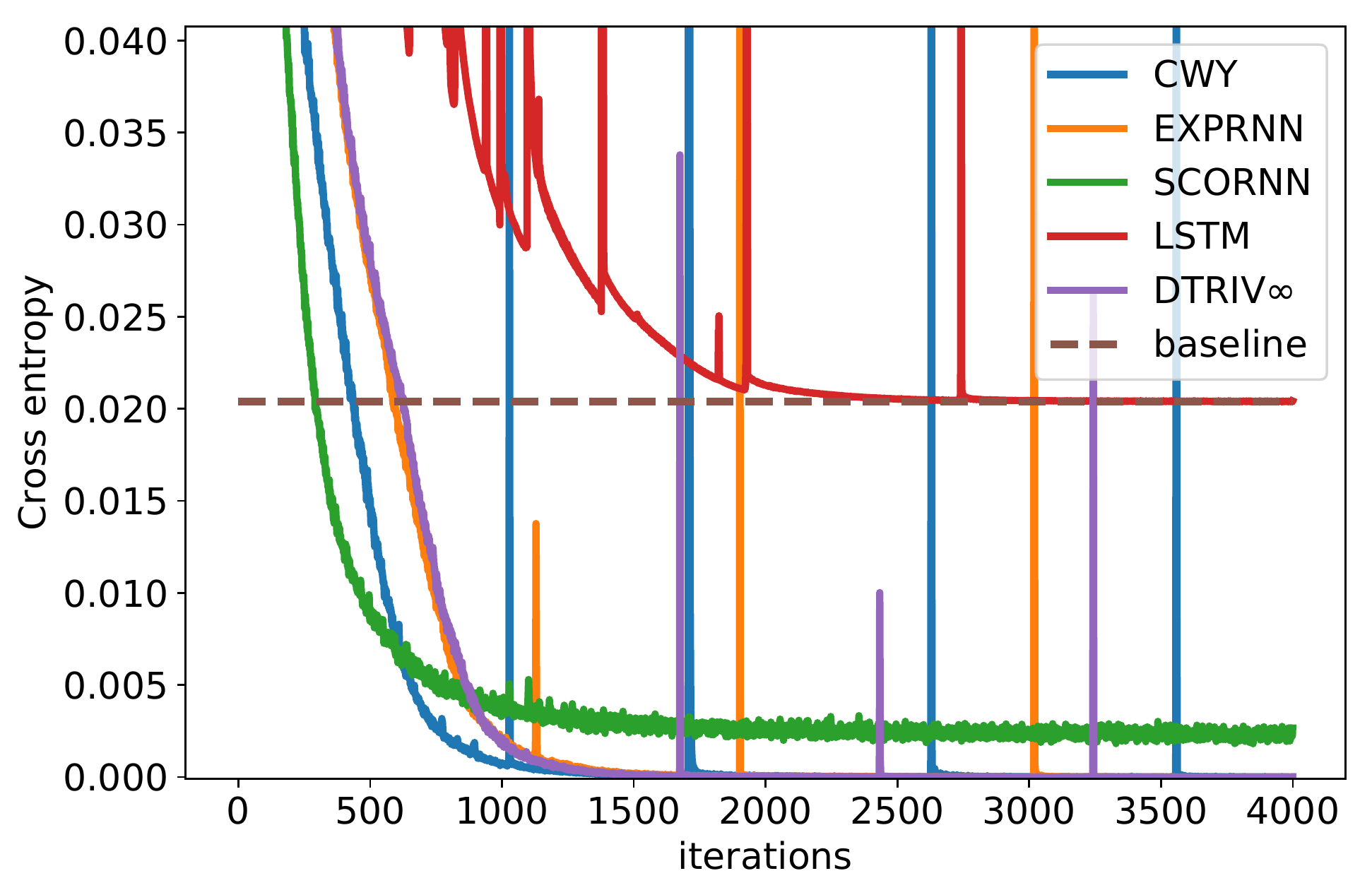}
  \caption[]%
        {{\small \,}}   
  \label{fig:copy}
\end{subfigure}%
\begin{subfigure}[b]{0.38\textwidth}
  \centering
  \includegraphics[width=\linewidth]{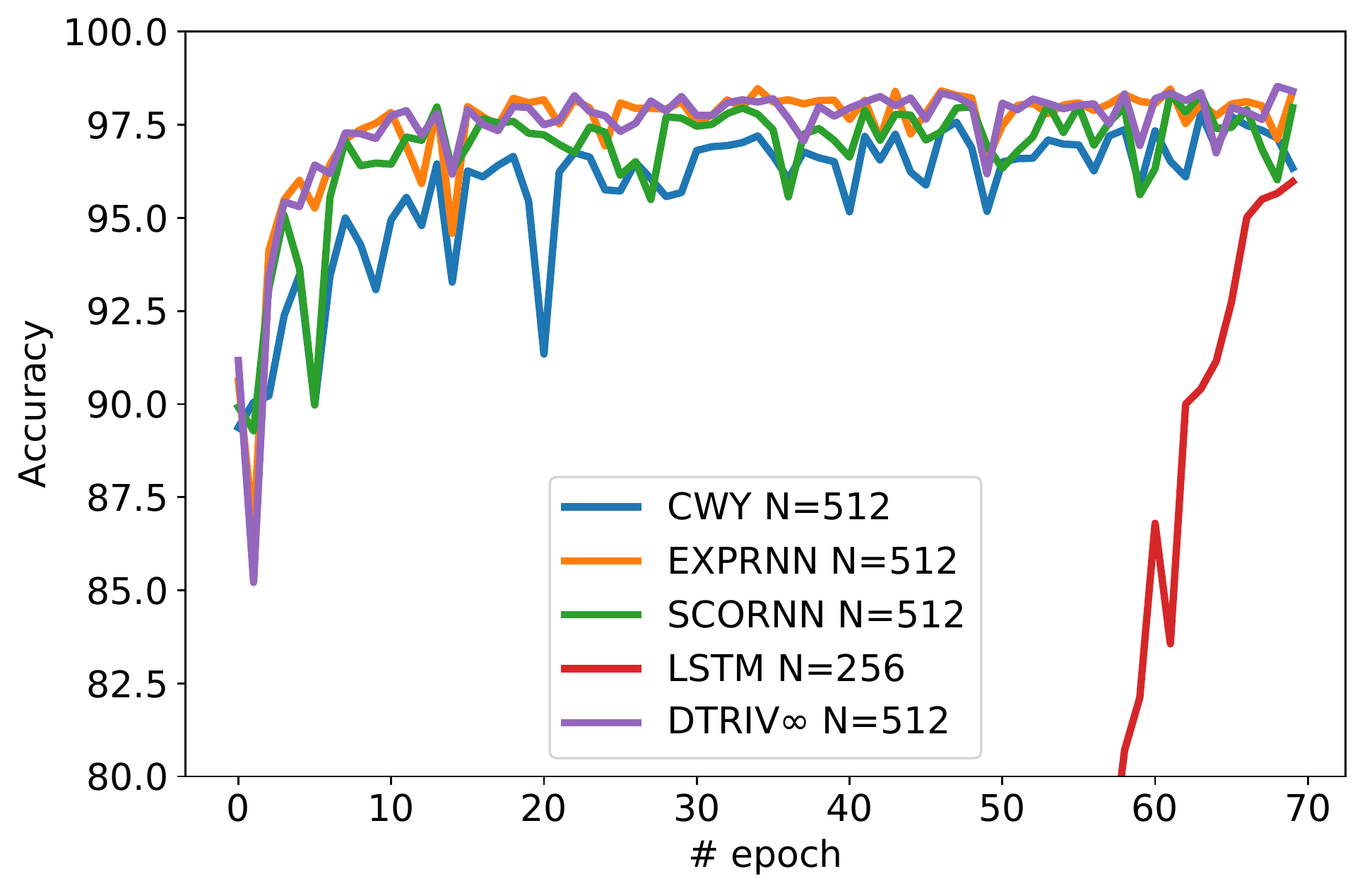}
  \caption[]%
        {{\small \,}}   
  \label{fig:mnist}
\end{subfigure}
\begin{subfigure}[b]{0.18\textwidth}
  \centering
  \includegraphics[width=\linewidth]{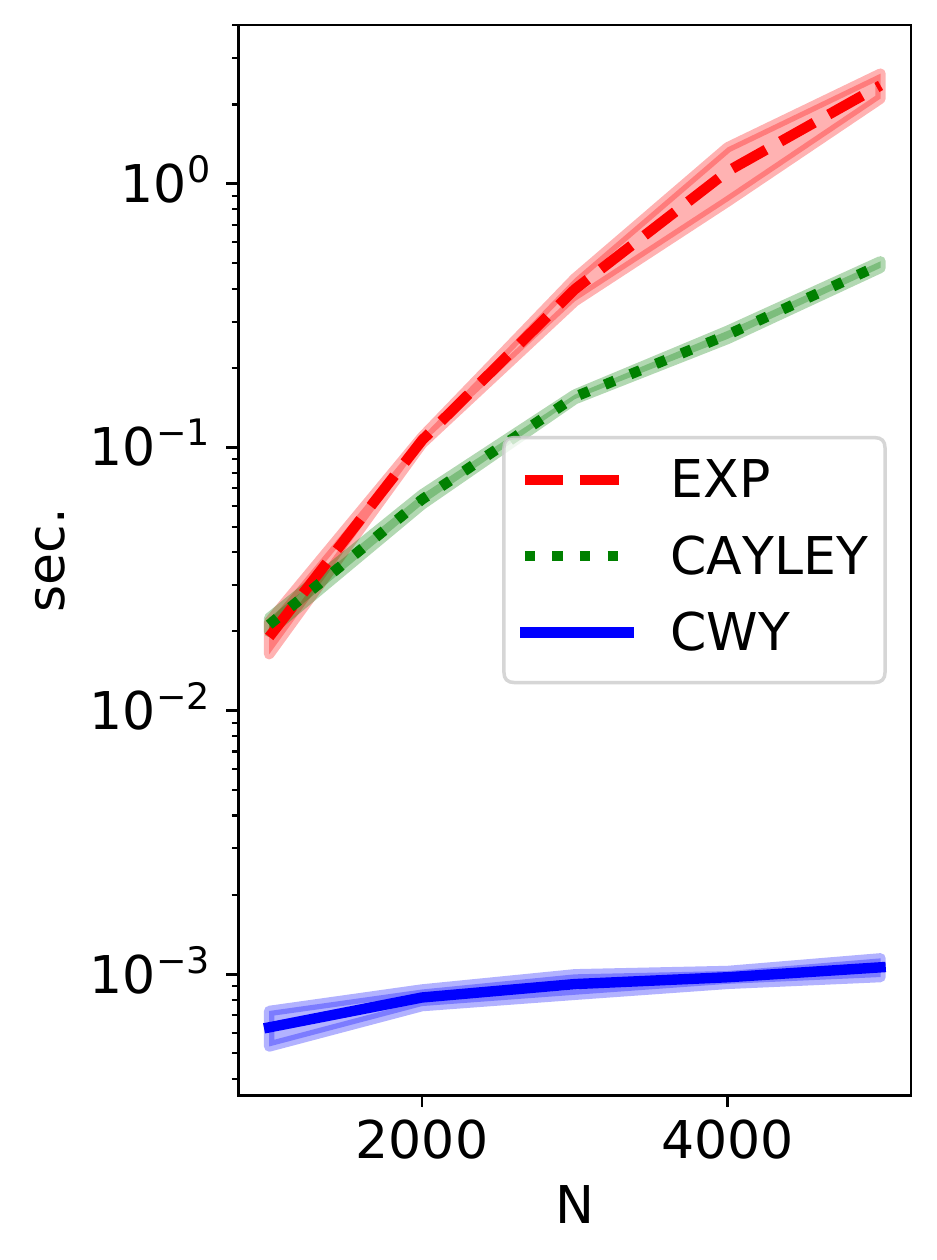}
  \caption[]%
        {{\small \,}}   
  \label{fig:timecomp}
\end{subfigure}
\caption{\textbf{(a)} Copying task, $\mathcal{T} = 1000$. \textbf{(b)} Pixel-by-pixel MNIST, test accuracy. \textbf{(c)} parametrization time comparison, mean and standard error over $10$ samples.}
\label{fig:sttasks}
\end{figure*}

\subsection{Standard Tasks and Time Comparison}

We evaluate orthogonal RNN with CWY parametrization on standard benchmarks, aimed to test the ability of  RNN to capture long-term dependencies in the data:

1. \textit{Copying task}. The input contains $10$ digits sampled uniformly from $\{ 1, \dots, 8 \}$, then $\mathcal{T}$ zeros, one ``$9$'' (\textit{start}) and $9$ zeros. The output consists of $\mathcal{T} + 10$ zeros and $10$ first digits from the input. Hence, the goal of RNN is to copy the random input prefix after observing $\mathcal{T}$ zeros. The goal is to beat a no-memory \textit{baseline}, which outputs $\mathcal{T} + 10$ zeros and $10$ randomly sampled digits from $\{ 1, \dots, 8 \}$ independently of the input. The cross-entropy of this baseline is $10 \log 8 / (\mathcal{T} + 20)$.

2. \textit{Pixel-by-pixel MNIST}. The input contains images of digits from MNIST \citep{mnist}, flattened into sequences of length $784$. The goal is to classify the digit using the last hidden state of the RNN.

For both experiments we reuse the publicly available code from \citep{cheap} in PyTorch \citep{pytorch}, \textbf{without tuning any hyperparameters}, changing random initializations or seeds, etc. Figures \ref{fig:copy}, \ref{fig:mnist} (a-b) demonstrates the results of plugging CWY directly into the code. In the Copying task with $\mathcal{T} = 1000, L = N = 190$, CWY is converging to zero cross entropy faster, than EXPRNN and DTRIV$\infty$ \citep{dtriv}, while SCORNN fails to converge to zero and LSTM \citep{lstm} cannot beat the baseline. In the Pixel-by-pixel MNIST, CWY ($L = N$) shows competitive performance, going beyond $95\%$ accuracy and matching the results of \citet{hh}. See details and additional experimental results (Copying task with $\mathcal{T} = 2000$ and permuted MNIST) in Appendix \ref{sec:sttasksmore}.

In addition to standard benchmarks, we perform a time comparison for computing CWY, exponential parametrization and Cayley map (Figure \ref{fig:timecomp}), where the argument is a random matrix. See Appendix \ref{sec:sttasksmore} for details. We conduct experiments on GPU and use the following methods from PyTorch 1.7: \texttt{torch.matrix\_exp} implementing a state-of-the-art algorithm for matrix exponential \citep{mexp}, \texttt{torch.solve} for Cayley map and \texttt{torch.triangular\_solve} for CWY. We observe that for a range of matrix sizes CWY is 1-3 orders of magnitude faster than other parametrizations. While we used full CWY ($L = N$) for this comparison, $L < N$ would lead to further speedups.

\subsection{Neural Machine Translation} 


We train an orthogonal RNN-based seq2seq model with attention mechanism \citep{bahdanau2014neural} to translate sentence pairs between a given source and target language. See Appendix Section \ref{sec:nlpdetails} for additional architectural and experimental details. We focus on the English-to-Spanish dataset within the Tatoeba corpus \citep{artetxe2019massively}, a publicly available dataset with over 100,000 sentence pairs. We compare several variants of orthogonal RNNs with absolute value nonlinearities which are exact norm-preserving \citep{dorobantu2016dizzyrnn} and compare them against GRUs and LSTMs used as RNN units in a seq2seq architecture. All variants of RNN have hidden dimension $N=1024$. For the CWY and non-orthogonal variants, we conduct experiments with the Adam optimizer (see Table \ref{tab:spa-to-eng}).

We find that standard RNNs underperform LSTMs and GRUs \citep{gru}, but that parametrization-based orthogonal RNN variants are able to achieve comparable performance.
Among orthogonal RNN methods, our CWY approaches achieve the lowest test cross-entropy, whilst requiring the fewest parameters and, via our efficient parametrization, retaining training speed comparable to LSTMs and GRUs. 
We find that even the full-orthogonal CWY scheme with $L = N$ runs faster in practice than other orthogonal approaches.
A sweet-spot parameter value $L=128$ illustrates the trade-off between the capacity of the model (which increases with larger values of $L$) and the landscape of the objective function (that simplifies with smaller values of $L$). As mentioned before, in exact arithmetic our CWY is equivalent to the explicit Householder reflections approach leveraged by \citet{joffrain}; however, our approach achieves far superior speed, as illustrated in Table \ref{fig:hr-cwy-speed}. The enhanced speed of our CWY variants, when paired with the optimizer-choice flexibility, makes this approach a compelling alternative to LSTMs and GRUs.  

\begin{figure}[t]
\centering
\centerline{\includegraphics[width=\linewidth]{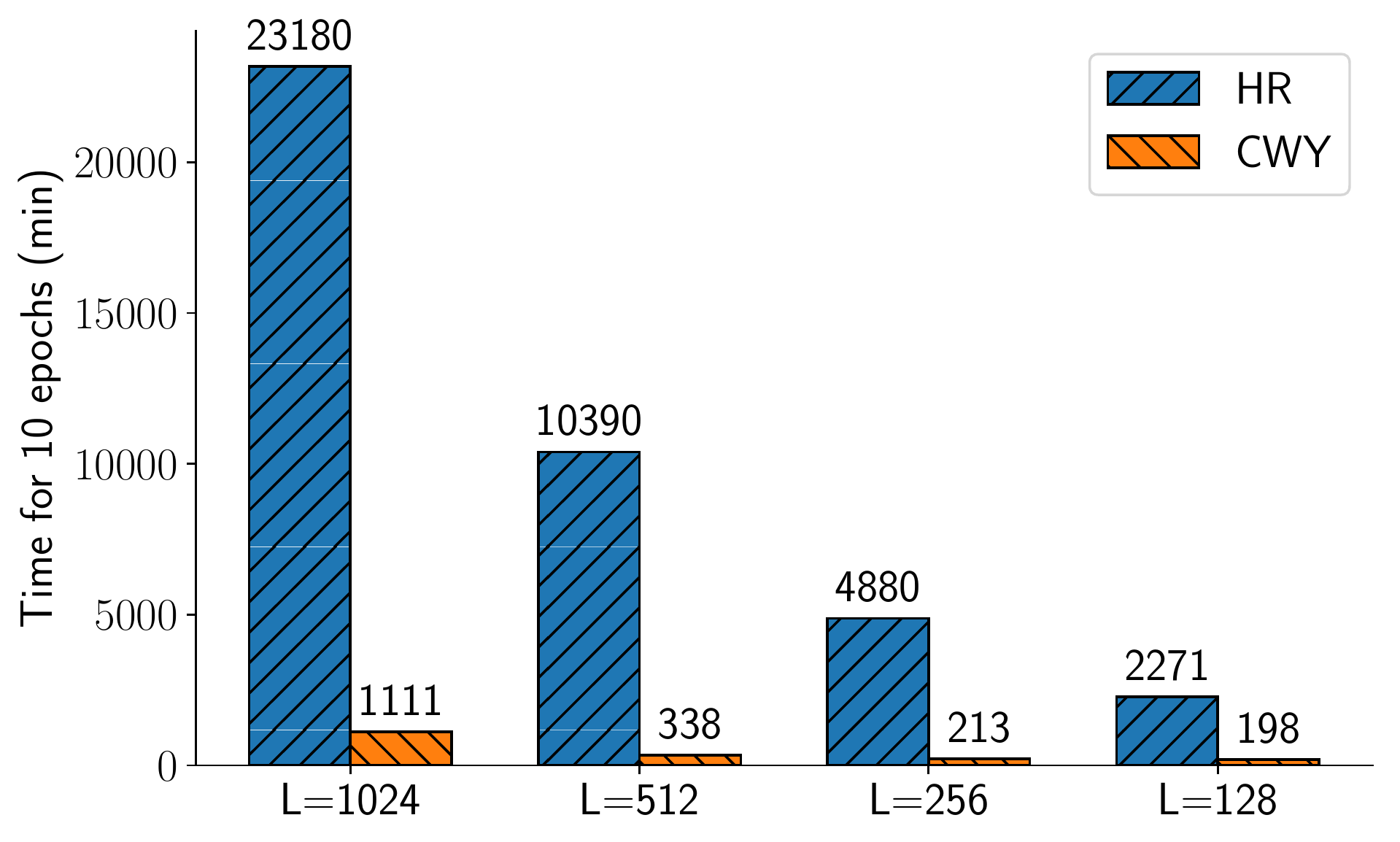}}
\caption{The CWY and HR methods are numerically equivalent; however, the parametrization of the CWY allows us to perform projections much more efficiently, leading to dramatic improvements in training time and, thereby, practical viability. The experiment is conducted on a Tensor Processing Unit (TPU).}
\label{fig:hr-cwy-speed}
\end{figure}

\begin{table}[t]
\caption{Tatoeba Spa-to-Eng NMT results. We report perplexity (PP) on a test set (a smaller value indicates a better result). Time is reported for 10 epochs. CWY achieves the best performance while preserving speed and requiring the fewest parameters. There is a sweet-spot for the test loss ($L=128$).}
\label{table:NMT}
\begin{center}
\begin{tabular}{llll}
\textbf{MODEL} & \makecell{\textbf{TEST} \\ \textbf{PP}} & \makecell{\textbf{TIME} \\ \textbf{(MIN.)}} & \textbf{PARAMS} \\
\hline
RNN & 1.66 & 148 & $\approx$ 25M \\
GRU & 1.47 & 173 & $\approx$ 32M  \\
LSTM & 1.46 & 232 & $\approx$ 37M \\
SCORNN & 1.49 & 1780 & $\approx$ 25M \\
RGD & 4.03 & 1780 & $\approx$ 25M \\
EXPRNN & 1.51 & 2960 & $\approx$ 25M \\
CWY L=1024 & 1.47 & 1111 & $\approx$ 25M \\
CWY L=512 & 1.58 & 338 & $\approx$ 24M \\
CWY L=256 & 1.56 & 213 & $\approx$ 23M \\
CWY L=128 & \textbf{1.41} & \textbf{198} & $\approx$ \textbf{23M} \\
CWY, L=64 & 1.52 & 175 & $\approx$ 23M \\
\label{tab:spa-to-eng}
\end{tabular}
\end{center}
\end{table}

\subsection{Video prediction with ConvNERU}

\begin{table*}[t]
\caption{KTH action dataset test results. The indicated metric is average per-frame $l_1$-loss. Video frames are in grey scale with brightness ranged in $[0, 1]$. The GPU memory is evaluated for the ``Box[ing]" class which has the longest sequences. We do not report the last two columns for the ``Zeros" method which is only aimed to demonstrate the importance of recurrent connections.}
\label{table:exp1}
\begin{center}
\begin{tabular}{lllllllll}
\textbf{METHOD} & \textbf{WALK}
    & \textbf{JOG} & \textbf{RUN} &
    \textbf{BOX} & \textbf{WAVE} &  \!\!\!\! \textbf{CLAP} & \!\!\!\! \textbf{\# PARAMS} & \!\!\!\! \textbf{GPU MEMORY} \\
\hline
ConvLSTM & 223.3 & 266.8 & 297.8 & 188.9 & 157.9 & 162.3 & $\approx$ 3.26 M & 8.7 Gb \\
Zeros & 160.3 & 176.1 & 203.8 & 179.0 & 197.2 & 147.4 & --- & --- \\
Glorot-Init & 145.8 & 161.5 & 182.1 & 179.9 & 164.5 & 145.4 & $\approx$ 0.72 M & 3.5 Gb \\
Orth-Init & 139.9 & 153.2 & 175.0 & 173.3 & 150.8 & 144.0 & As above & As above \\
RGD-C-C & 135.8 & 155.7 & 170.7 & 172.9 & 160.3 & 144.5 & As above & As above \\
RGD-E-C & 143.3 & 152.5 & 173.7 & 171.9 & 172.9 & 142.6 & As above & As above \\
RGD-C-QR & 143.1 & 155.0 & 171.5 & 173.1 & 150.2 & 142.7 & As above & As above \\
RGD-E-QR & 135.5 & 153.9 & 169.6 & 169.9 & 160.4 & 142.5 & As above & As above \\
RGD-Adam & 142.6 & 157.3 & 177.8 & 176.8 & 159.1 & 145.2 & As above & As above \\
OWN & 137.5 & 155.0 & 177.7 & 171.3 & 149.8 & 142.5 & As above & As above \\
T-CWY & \textbf{134.6} & \textbf{149.8} & \textbf{166.7} & \textbf{166.2} & \textbf{147.8} & \textbf{141.2} & As above & As above \\
\end{tabular}
\end{center}
\end{table*}

\begin{figure*}
\centering
\includegraphics[width=\linewidth]{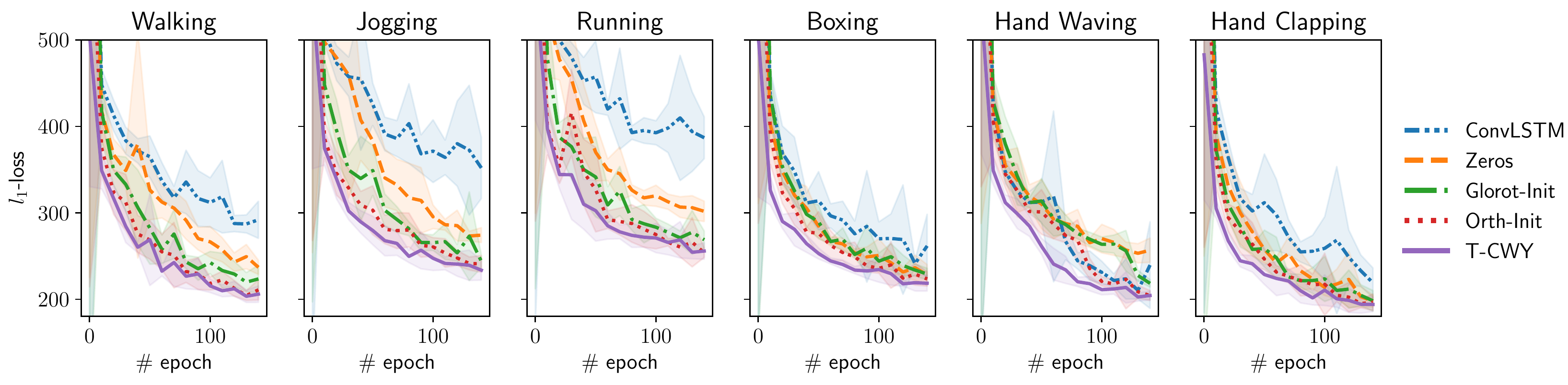}
\caption{Validation $l_1$-loss. Mean and standard error across each $10$ epochs is reported.}
\label{fig:learning_curve}
\end{figure*}

We demonstrate performance of T-CWY and ConvNERU in the task of one-step-ahead video prediction on the KTH action dataset. As a baseline we chose ConvLSTM \citep{convlstm}, a convolutional adaptation of LSTM. 
In addition, our goal is to compare with other methods for Stiefel optimization and justify the need for Stiefel constraints.

We conduct experiments on the KTH action dataset \citep{kth} containing grey scale video recordings of 25 people, each performing 6 types of actions: walking, jogging, running, boxing, hand waving and hand clapping. We do separate evaluations for each action type to evaluate how the model learns different types of dynamics. As a video-prediction architecture we apply a simplified version of \citep{arch1,arch2} where we try different types of recurrent block design (see further). We opt for minimizing the $l_1$-loss $| \widehat{\mathcal{I}} - \mathcal{I} |$ ($l_1$-loss) during training where $\widehat{\mathcal{I}}, \mathcal{I}$ denote predicted and ground-truth frame respectively. For all unconstrained parameters we use the Adam optimizer. See Appendix Section \ref{sec:vpdetails} for more details on data preprocessing, experiment setup and architecture. We compare different designs of recurrent unit used in the full architecture. \textit{ConvLSTM} was used in the original variant of the architecture \citep{arch1,arch2}. \textit{Zeros} indicates ConvNERU with transition kernel $K$ zeroed out (i.e. prediction conditioned on the previous frame only). \textit{Glorot-Init} is a modified ConvNERU where $K$ is unconstrained initialized through Glorot uniform initialization \citep{glorot}. \textit{Orth-Init} indicates a modified ConvNERU with unconstrained $q \widehat{K}$ initialized as a Stiefel matrix by QR decomposition of a random matrix. \textit{RGD-*-*} indicates Stiefel RGD for optimizing $q \widehat{K}$ with various combinations of inner product and retractor (consistent with the notation in Table \ref{table:2}). \textit{RGD-Adam} is an Adam adaptation of RGD \citep{rgdst} applied to optimization of $q \widehat{K}$. Finally, \textit{OWN} and \textit{T-CWY} indicate ConvNERU with $q \widehat{K}$ matrix parametrized by OWN and T-CWY respectively.

Table \ref{table:exp1} demonstrates test $l_1$-loss, number of parameters and maximal GPU memory consumption. Additionally, Figure \ref{fig:learning_curve} demonstrates validation $l_1$-loss depending on epoch number for a subgroup of evaluluated methods. We see from the figure that in most cases, with the same learning rate, ConvLSTM cannot outperform ``Zeros" baseline which has no recurrence and, hence, does not face an issue of gradient explosion or vanishing. Among the versions of ConvNERU and its unconstrained analogs, we observe that T-CWY performs best on both validation and test set while having several times less parameters and using much less GPU memory than ConvLSTM.

\section{CONCLUSION}

We introduced an efficient scheme for parametrizing orthogonal groups $\mathcal{O} (N)$ and Stiefel manifolds $\text{St} (N, M)$, and compared to earlier approaches. The proposed $\mathcal{O}(N)$-parametrization scheme is efficient when working with large-scale orthogonal matrices on a parallelized computation unit such as GPU or TPU. We empirically demonstrated strong performance in real-world applications.

\section*{Acknowledgements}

We thank Jonathan Gordon, Wessel Bruinsma and David Burt for helpful feedback on an early version of the manuscript. We further thank anonymous reviewers for their valuable feedback.

Valerii Likhosherstov acknowledges support from the Cambridge Trust and DeepMind. Adrian Weller acknowledges support from the David MacKay Newton research fellowship at Darwin College, The Alan Turing Institute under EPSRC grant EP/N510129/1 and U/B/000074, and the Leverhulme Trust via CFI.

\bibliography{references}

\newpage

\onecolumn
\LARGE
\textsc{Supplementary Materials for the Paper ``CWY Parametrization: a Solution for Parallelized Optimization of Orthogonal and Stiefel Matrices''}

\normalsize
\appendix

\- \\
In this Appendix we provide the following:
\begin{itemize}
    \item In Section \ref{sec:woodbury}, Stiefel RGD through the Sherman-Morrison-Woodbury formula
    \item In Section \ref{sec:convneru}, Hidden state gradients of ConvNERU
    \item In Section \ref{sec:sttasksmore}, Copying, pixel-by-pixel MNIST and time comparison details
    \item In Section \ref{sec:nlpdetails}, Neural machine translation details
    \item In Section \ref{sec:vpdetails}, Video prediction details
    \item In Section \ref{sec:proofs}, Proofs of results
\end{itemize}

\section{STIEFEL RGD THROUGH THE SHERMAN-MORRISON-WOODBURY FORMULA} \label{sec:woodbury}

RGD with Cayley retraction requires inverting the $N \times N$-sized matrix $\eta_k A^{(k - 1)}$ thus becoming cubic in $N$. To make the computation more tractable, \cite{tagare} proposes to use the Sherman-Morrison-Woodbury formula which reduces the size of the inverted matrix to $2 M \times 2 M$ when the canonical inner product is chosen for RGD. A straightforward extension of Tagare's approach to the Euclidean inner product would require to invert $3M \times 3M$-sized matrix. To demonstrate that, we adapt derivations of \cite{tagare} for canonical inner product and extend them to Euclidean inner product. The following Lemma shows how to compute update $g_k (\eta_k)$ in time $O(N M^2 + M^3)$ without constructing $A^{(k - 1)}$ explicitly.
\begin{lemma}
Consider $\Omega \in \mathbb{R}^{N \times M}$ and $A = B C^\top \in \text{Skew} (N)$ for some matrices $B, C \in \mathbb{R}^{N \times D}$, $D \leq N$. Then
\begin{equation} \label{eq:l2}
    \text{Cayley} (A) \Omega = \Omega - B \biggl( I + \frac12 C^\top B \biggr)^{-1} \biggl( C^\top \Omega \biggr)
\end{equation}
\end{lemma}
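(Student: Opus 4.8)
The plan is to collapse the $N \times N$ inverse hidden inside $\text{Cayley}(A) = (I + A/2)^{-1}(I - A/2)$ down to the small $D \times D$ inverse appearing on the right-hand side, by exploiting the low-rank factorization $A = B C^\top$ through the Sherman--Morrison--Woodbury (SMW) identity. First I would record two invertibility facts so that every expression below is well-defined: since $A$ is skew-symmetric its eigenvalues are purely imaginary, hence $1 + \lambda/2 \neq 0$ and $I + A/2$ is nonsingular; and by the Sylvester determinant identity $\det(I + \tfrac12 B C^\top) = \det(I + \tfrac12 C^\top B)$, the $D \times D$ matrix $K := I + \tfrac12 C^\top B$ is invertible as well. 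This is exactly the matrix being inverted in the claim.

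Next I would apply SMW with the rank-$D$ factors $U = \tfrac12 B$ and $V = C$ to obtain
\[
(I + A/2)^{-1} = (I + \tfrac12 B C^\top)^{-1} = I - \tfrac12 B K^{-1} C^\top .
\]
Writing $(I - A/2)\Omega = \Omega - \tfrac12 B (C^\top \Omega)$ and substituting, the product $\text{Cayley}(A)\Omega$ expands into four pieces: the leading $\Omega$; two linear-in-$B$ terms of the form $-\tfrac12 B(\cdots)(C^\top\Omega)$ (one carrying a $K^{-1}$ and one not); and one term quadratic in $B$, namely $+\tfrac14 B K^{-1}(C^\top B)(C^\top\Omega)$.

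The crux of the argument is simplifying this last quadratic term. From the definition $K = I + \tfrac12 C^\top B$ one reads off $C^\top B = 2(K - I)$, hence $K^{-1}(C^\top B) = 2(I - K^{-1})$; substituting turns the quadratic piece into a linear combination of the terms already present. I expect the bookkeeping of this cancellation to be the only delicate step: after substitution the two bare $\pm\tfrac12 B(C^\top\Omega)$ contributions cancel, and the two remaining $-\tfrac12 B K^{-1}(C^\top\Omega)$ pieces combine to give $-B K^{-1}(C^\top\Omega)$, which is precisely $\Omega - B(I + \tfrac12 C^\top B)^{-1}(C^\top\Omega)$ as claimed. Everything else is routine matrix algebra, with no convergence, topological, or smoothness input required.
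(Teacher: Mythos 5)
Your proposal is correct and follows essentially the same route as the paper's proof: both establish invertibility of $I + \tfrac12 C^\top B$ via Sylvester's determinant identity and the purely imaginary spectrum of skew-symmetric matrices, then apply Sherman--Morrison--Woodbury to $(I + \tfrac12 BC^\top)^{-1}$ and simplify. The only difference is cosmetic bookkeeping --- you expand into four terms and substitute $C^\top B = 2(K - I)$, while the paper factors out $-\tfrac12 B(\cdot)C^\top\Omega$ first --- and both cancellations check out.
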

\begin{proof}
We first need to show that the right hand side of (\ref{eq:l2}) always exists, i.e. $I + \frac12 C^\top B$ is nonsingular:
\begin{align*}
    &\det (I + \frac12 C^\top B) = \det (I + \frac12 B C^\top) = \det (I + \frac12 A) \neq 0
\end{align*}
where in the first transition we apply Sylvester's determinant identity. $I + \frac12 A$ is nonsingular, because the spectrum of any skew-symmetric matrix is pure-imaginary (Theorem 12.9 from \citealp{gallier}). So the right hand side is well defined.

Through the application of Sherman-Morrison-Woodbury formula we deduce that
\begin{align*}
    \text{Cayley} (A) \Omega &= \biggl( I + \frac12 B C^\top \biggr)^{-1}  \biggl( I - \frac12 B C^\top \biggr) \Omega \\
    &= \biggl( I - \frac12 B (I + \frac12 C^\top B)^{-1} C^\top \biggr) \biggl( I - \frac12 B C^\top \biggr) \Omega \\
    &= \Omega - \frac12 B \biggl( (I + \frac12 C^\top B)^{-1} (I - \frac12 C^\top B) + I \biggr) C^\top \Omega \\
    &= \Omega - \frac12 B ( I + \frac12 C^\top B )^{-1} ( 2 I - C^\top B + C^\top B ) C^\top \Omega \\
    &= \Omega - B \biggl( I + \frac12 C^\top B \biggr)^{-1} \biggl( C^\top \Omega \biggr)
\end{align*}
which concludes the proof.
\end{proof}
For convenience denote $\mathcal{G}^{(k - 1)} = \frac{\partial f}{\partial \Omega} (\Omega^{(k - 1)})$. Depending on the inner product choice we get the following cases:

1. \textbf{Canonical inner product}. Then
\begin{align*}
    \eta_k A^{(k - 1)} &= \eta_k \mathcal{G}^{(k - 1)} {\Omega^{(k - 1)}}^\top - \eta_k \Omega^{(k - 1)} {\mathcal{G}^{(k - 1)}}^\top = B C^\top
\end{align*}
where
\begin{gather*}
    B = \eta_k \begin{bmatrix} \mathcal{G}^{(k - 1)} & \Omega^{(k - 1)} \end{bmatrix}, \quad C = \begin{bmatrix} \Omega^{(k - 1)} & -\mathcal{G}^{(k - 1)} \end{bmatrix}, \quad B, C \in \mathbb{R}^{N \times 2 M}.
\end{gather*}

2. \textbf{Euclidean inner product}. Then
\begin{align*}
    \eta_k A^{(k - 1)} &= \eta_k \mathcal{G}^{(k - 1)} {\Omega^{(k - 1)}}^\top - \eta_k \Omega^{(k - 1)} {\mathcal{G}^{(k - 1)}}^\top + \frac{\eta_k}{2} \Omega^{(k - 1)} E {\Omega^{(k - 1)}}^\top = B C^\top
\end{align*}
where
\begin{gather*}
    E = {\mathcal{G}^{(k - 1)}}^\top \Omega^{(k - 1)} - {\Omega^{(k - 1)}}^\top \mathcal{G}^{(k - 1)}, \quad B = \eta_k \begin{bmatrix} \mathcal{G}^{(k - 1)} & \Omega^{(k - 1)} & \frac12 \Omega^{(k - 1)} E \end{bmatrix}, \\
    C = \begin{bmatrix} \Omega^{(k - 1)} & -\mathcal{G}^{(k - 1)} & \Omega^{(k - 1)} \end{bmatrix}, \quad
    B, C \in \mathbb{R}^{N \times 3M}.
\end{gather*}

\section{HIDDEN STATE GRADIENTS OF CONVNERU} \label{sec:convneru}

The convolution operation can be expressed as
\begin{gather*}
(\mathcal{K} * G^{(t - 1)})_{i,j} = \widehat{\mathcal{K}}^\top \overline{G}^{(t - 1)}_{i,j}, \quad \overline{G}^{(t - 1)} \in \mathbb{R}^{h \times w \times q^2 f_{out}}, \label{eq:kresh} \\
\overline{G}^{(t - 1)}_{i,j} = \text{concat} \biggl( \{ G^{(t - 1)}_{l,p} \, | \, i - \frac{q - 1}{2} \leq l \leq i + \frac{q - 1}{2}, j - \frac{q - 1}{2} \leq p \leq j + \frac{q - 1}{2}  \} \biggr) \label{eq:concat}
\end{gather*}
where $G^{(t - 1)}_{l,p} \in \mathbb{R}^{f_{out}}$ is a zero vector when $l, p$ are pointing outside image borders (\textit{zero padding}). By definition of $\overline{G}^{(t - 1)}$ and $\mathcal{K} * G^{(t - 1)}$ we have the following chain of inequalities between Frobenius norms $\| \cdot \|_F$:
\begin{gather*}
    \| \mathcal{K} * G^{(t - 1)}) \|_F^2 = \sum_{i,j} \| (\mathcal{K} * G^{(t - 1)}))_{i,j} \|^2_2 \label{eq:ineq1} = \sum_{i,j} \| \widehat{\mathcal{K}}^\top \overline{G}^{(t - 1)}_{i,j} \|_2^2 \leq \sum_{i,j} \| \widehat{\mathcal{K}} \|_2^2 \| \overline{G}^{(t - 1)}_{i,j} \|_2^2 \\
    = \| \widehat{\mathcal{K}} \|_2^2 \cdot \| \overline{G}^{(t - 1)} \|_F^2 \leq q^2 \| \widehat{\mathcal{K}} \|_2^2 \cdot \| G^{(t - 1)} \|_F^2 \label{eq:ineq2}
\end{gather*}
Assuming that $| \sigma (x) | \leq | x |$ which holds for most popular choices of nonlinearity (ReLU, LeakyReLU, tanh), the norm of $G^{(t)}$ cannot grow in exponential manner. The same holds for a sequence of gradients with respect to $\{ G^{(t)} \}$, since it is obtained by sequentially applying a transposed linear operator corresponding to "$\mathcal{K} *$" convolution operation and transposition preserves the linear operator norm. This justifies the property of ConvNERU being robust to gradient explosion while allowing long-term information propagation thank to Stiefel convolution kernel. The conducted analysis is reminiscent of Lipschitz constant estimate for image classification CNNs performed by \cite{parseval}.

\section{COPYING, PIXEL-BY-PIXEL MNIST AND TIME COMPARISON: MORE DETAILS AND RESULTS} \label{sec:sttasksmore}

Results for Copying task ($\mathcal{T} = 2000$), and permuted MNIST (i.e. when pixels in a flatten image are permuted randomly) are shown on Figure \ref{fig:sttasks2}. For all setups but SCORNN in the Copying task we used initialization technique from \citep{henaff}, whilst for SCORNN we used initialization from \citep{helfrich}. For all setups in the Pixel-by-pixel MNIST (whether permuted or not) we used initialization from \citep{helfrich}. While our results on Pixel-by-pixel MNIST match those of \citet{hh}, \citet{hh} were not able to provide comparable results for the Copying task. We observe that correct initialization is crucial for this task.

To initialize CWY, we, first of all, initialize a skew-symmetric matrix, as discussed above. Then we take exponent of this matrix, obtaining an orthogonal matrix. Then, in order to initialize vectors $v^{(1)}, \dots, v^{(N)}$, we run the same procedure as in the Theorem \ref{lemma:1} proof (QR decomposition using Householder reflections).

To do the time comparison, we draw elements of $v^{(1)}, \dots, v^{(N)}$ for CWY from a standard normal distribution. For matrix exponent and Cayley map, we initialize skew symmetric arguments as $\mathcal{X} - \mathcal{X}^\top$, where entries of $\mathcal{X}$ are sampled from a standard normal distribution.

\begin{figure*}[!ht]
\centering
\begin{subfigure}[b]{0.49\textwidth}
  \centering
  \includegraphics[width=\linewidth]{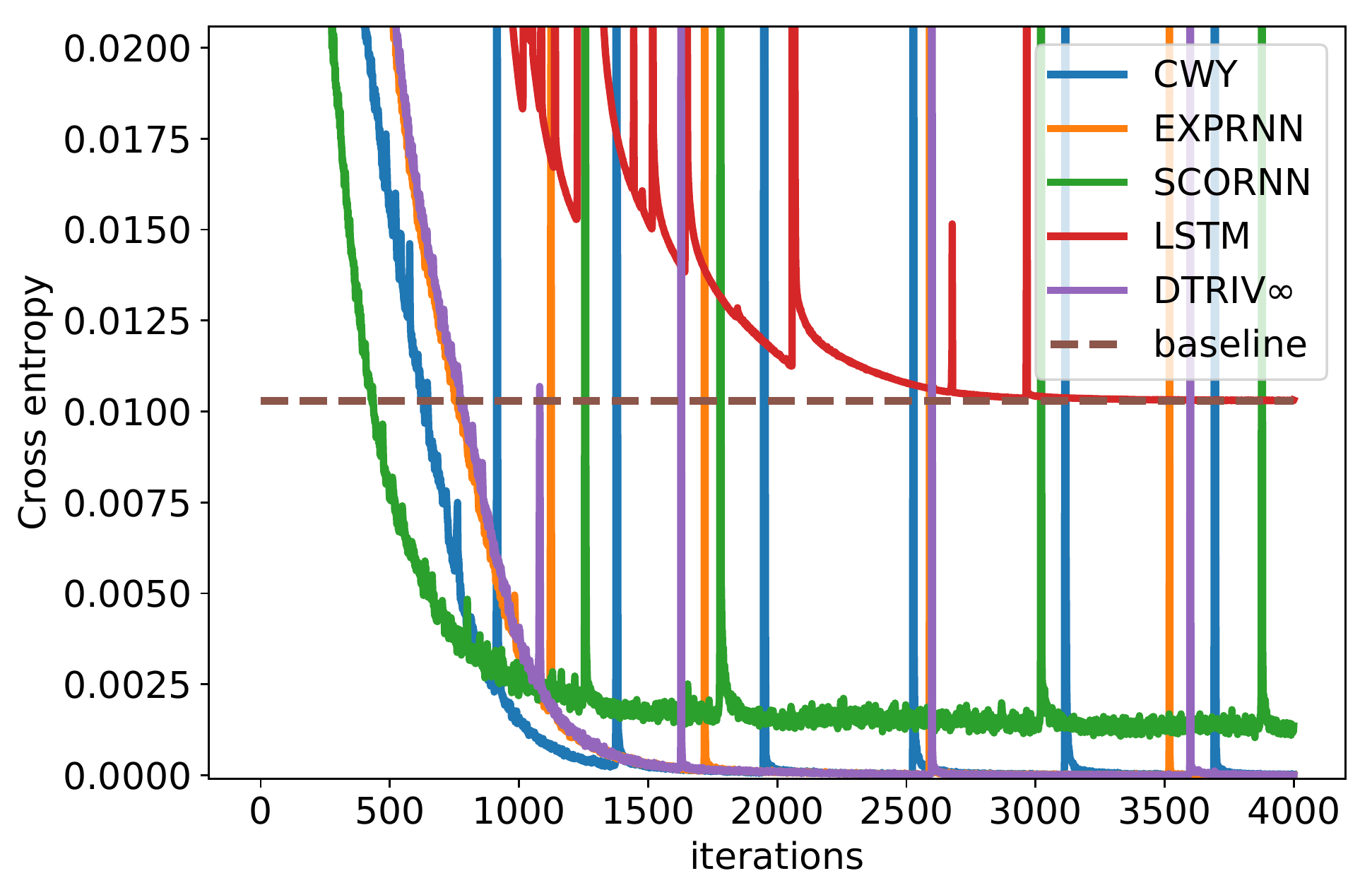}
  \caption[]%
        {{\small \,}}   
  \label{fig:copy2000}
\end{subfigure}%
\begin{subfigure}[b]{0.49\textwidth}
  \centering
  \includegraphics[width=\linewidth]{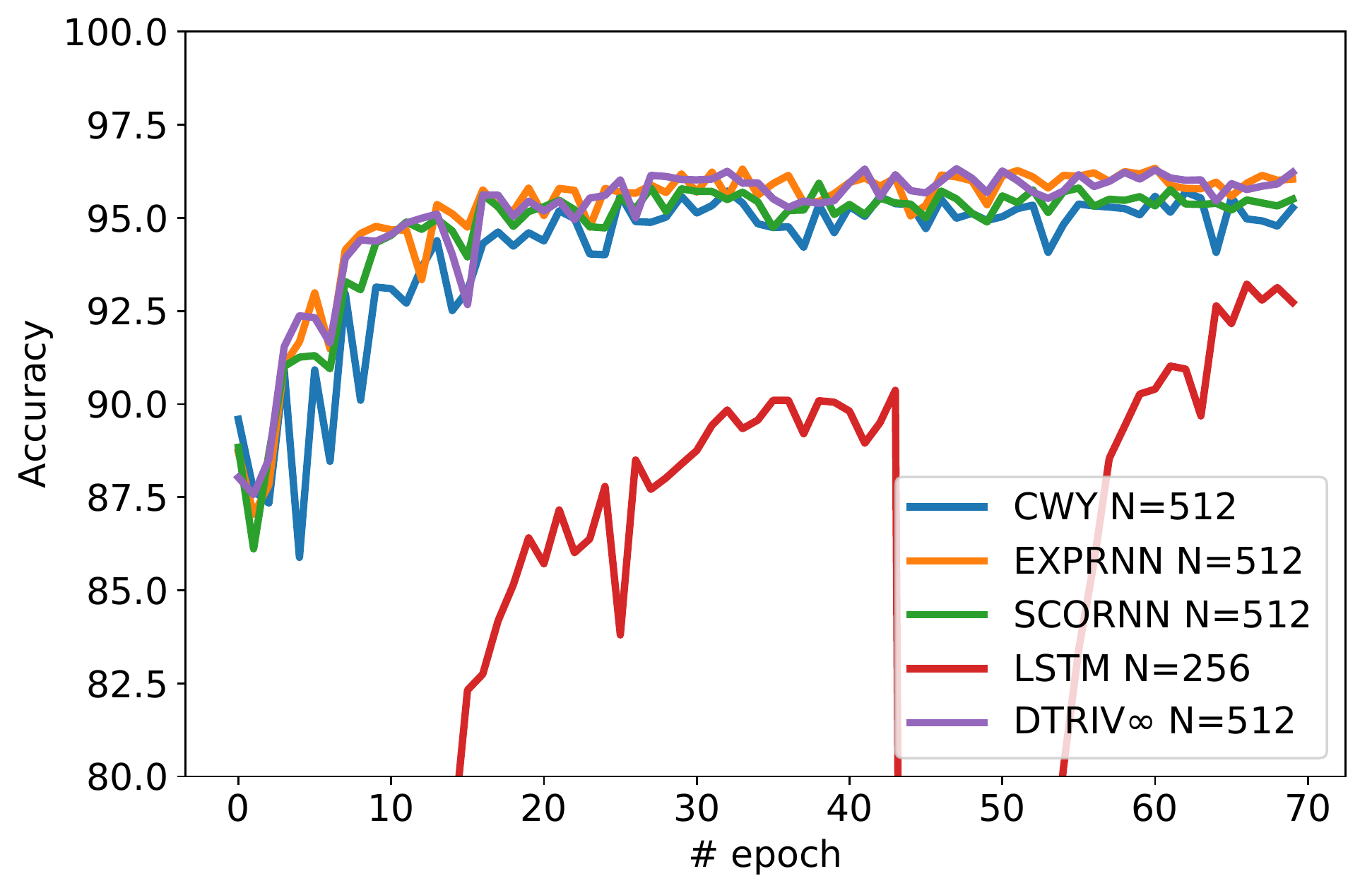}
  \caption[]%
        {{\small \,}}   
  \label{fig:pmnist}
\end{subfigure}
\caption{\textbf{(a)} Copying task, $\mathcal{T} = 2000$. \textbf{(b)} Permuted Pixel-by-pixel MNIST, test accuracy.}
\label{fig:sttasks2}
\end{figure*}

\section{NEURAL MACHINE TRANSLATION: MORE DETAILS AND RESULTS} \label{sec:nlpdetails}

We take aligned bi-texts between the source and target languages and, as preprocessing, remove accents and return word pairs in the form [English, Spanish]. The resulting dataset has an average sequence length of $\approx 17$ for both the input and target sequences.

Using a single Tensor Processing Unit (TPU) per model, we train several models from scratch, with no pre-training, on 80,000+ sentence pairs and test on the remaining 20,000+ pairs from the full 100,000+ pair dataset to compare their learning capabilities and stability. We use JAX\footnote{\href{https://jax.readthedocs.io/en/latest/}{https://jax.readthedocs.io/en/latest/}} library for the implementation. Given that we evaluate all models on the same corpus and that our goal is to benchmark across architectures, we elected to employ no pre-training and examine/compare cross-entropy loss directly.  

See Figure \ref{fig:att} for the architecture illustration. In our experiments, we used a batch size of 64, an embedding dimension size of 256, and a learning rate of $10^{-2}$. For hyperparameter sweeps, we ran experiments with smaller hidden unit sizes. We also experimented with larger and smaller learning rates. Ultimately, for simplicity and clarity, we only present results using the parameters described above.

For additional experimental results, see Table \ref{table:NLPSTATS}.

\section{VIDEO PREDICTION: MORE DETAILS} \label{sec:vpdetails}

All videos, 4 seconds in average, are recorded with a static camera with 25 fps frame rate and frame size of $160 \times 120$ pixels. We crop and resize each frame into $128 \times 128$ pixels and then reshape each frame into $64 \times 64 \times 4$ by moving groups of $2 \times 2$ pixels into channel dimension. Since each video sequence has a different number of frames, we employ zero padding during batch construction. We use persons with indices 1-12 for training, 13-16 for validation and 17-25 for testing. See Table \ref{table:stats} for KTH dataset statistics.

Given a sequence of known frames $\mathcal{I}^{(1)}, \dots, \mathcal{I}^{(t)} \in [ 0, 1 ]^{64 \times 64 \times 4}$, the network outputs a prediction $\widehat{\mathcal{I}}^{(t + 1)}$ of the next frame $\mathcal{I}^{(t + 1)}$. The network is designed as a recurrent block composed of several convolutional recurrent units stacked together with the sequence $\{ \mathcal{I}^{(i)} \}_{i = 1}^t$ passed to the input.
In order to increase the receptive field of the recurrent architecture while maintaining a tractable training procedure, we adapt a simplified version of the video prediction architecture from \cite{arch1,arch2}. Namely, we stack several recurrent units with a bottleneck structure (hidden sizes $32 \times 32 \times 32 \to 16 \times 16 \times 64 \to 8 \times 8 \times 128 \to 16 \times 16 \times 64 \to 32 \times 32 \times 32$) and skip connections. We alternate recurrent layers with strided convolutions and then deconvolutions. After each convolution and deconvolution we place a ReLU nonlinearity, as well as using ReLU as the recurrent nonlinearity $\sigma$. In the proposed architecture a prediction $\widehat{\mathcal{I}}^{(t + 1)}$ is conditioned upon $\mathcal{I}^{(t)}$ through bottleneck and skip connections and conditioned upon $\{ \mathcal{I}^{(t')} \}_{t' < t}$ through recurrent temporal connections. See Figure \ref{fig:arch} for architecture illustration.

We opt for batch size of 3, recurrent kernel size $q = 3$, learning rate of $10^{-3}$. Our experiments are implemented in Tensorflow and run on a single Nvidia Tesla P100 GPU for each experiment. For each experiment we run 150 epochs and choose the model's state showing smallest validation loss value for testing.

\begin{table}
\caption{Tatoeba Spa-to-Eng NMT results. We ran 3 seeds for each model. Below we present the average test loss across these seeds, as well as the associated standard deviation. We did not run additional seeds for non-CWY orthogonal parameterization approaches as these methods are slow (requiring many TPU hours to train) and our primary comparison with them was w.r.t. speed.}
\label{table:NLPSTATS}
\begin{center}
\begin{tabular}{lll}
\textbf{MODEL} & \textbf{TEST CE LOSS} & \textbf{STANDARD ERROR} \\
\hline
RNN & 0.74 & .08  \\
GRU & 0.56 & .05    \\
LSTM & 0.55 & .05  \\
RGD & 2.01 & .14  \\
CWY, $L=1024$ & 0.56 & .03 \\
CWY, $L=512$ & 0.66 & .03 \\
CWY, $L=256$ & 0.64 & .06 \\
CWY, $L=128$ & 0.50 & .01 \\
CWY, $L=64$ & 0.60 & .01 \\
\end{tabular}
\end{center}
\end{table}

\begin{table}
\begin{center}
\caption{KTH action dataset statistics.}
\label{table:stats}
\begin{tabular}{lllllll}
\textbf{STATISTIC} & \textbf{WALK} & \textbf{JOG} & \textbf{RUN} & \textbf{BOX} & \textbf{WAVE} & \textbf{CLAP} \\
\hline
Min sequence length & 62 & 42 & 26 & 42 & 62 & 24 \\
Max sequence length & 231 & 152 & 111 & 362 & 245 & 235 \\
Mean sequence length & 109.3 & 68.0 & 48.9 & 110.3 & 129.0 & 106.2 \\
Total frames count (train set) & 20122 & 12730 & 9096 & 20515 & 23958 & 19529 \\
Total frames count (val. set) & 7622 & 4551 & 3448 & 7558 & 8436 & 6415 \\
Total frames count (test set) & 15991 & 9913 & 7018 & 15277 & 18963 & 16125 \\
\end{tabular}
\end{center}
\end{table}

\begin{figure}
\begin{center}
\centerline{\includegraphics[width=0.6\linewidth]{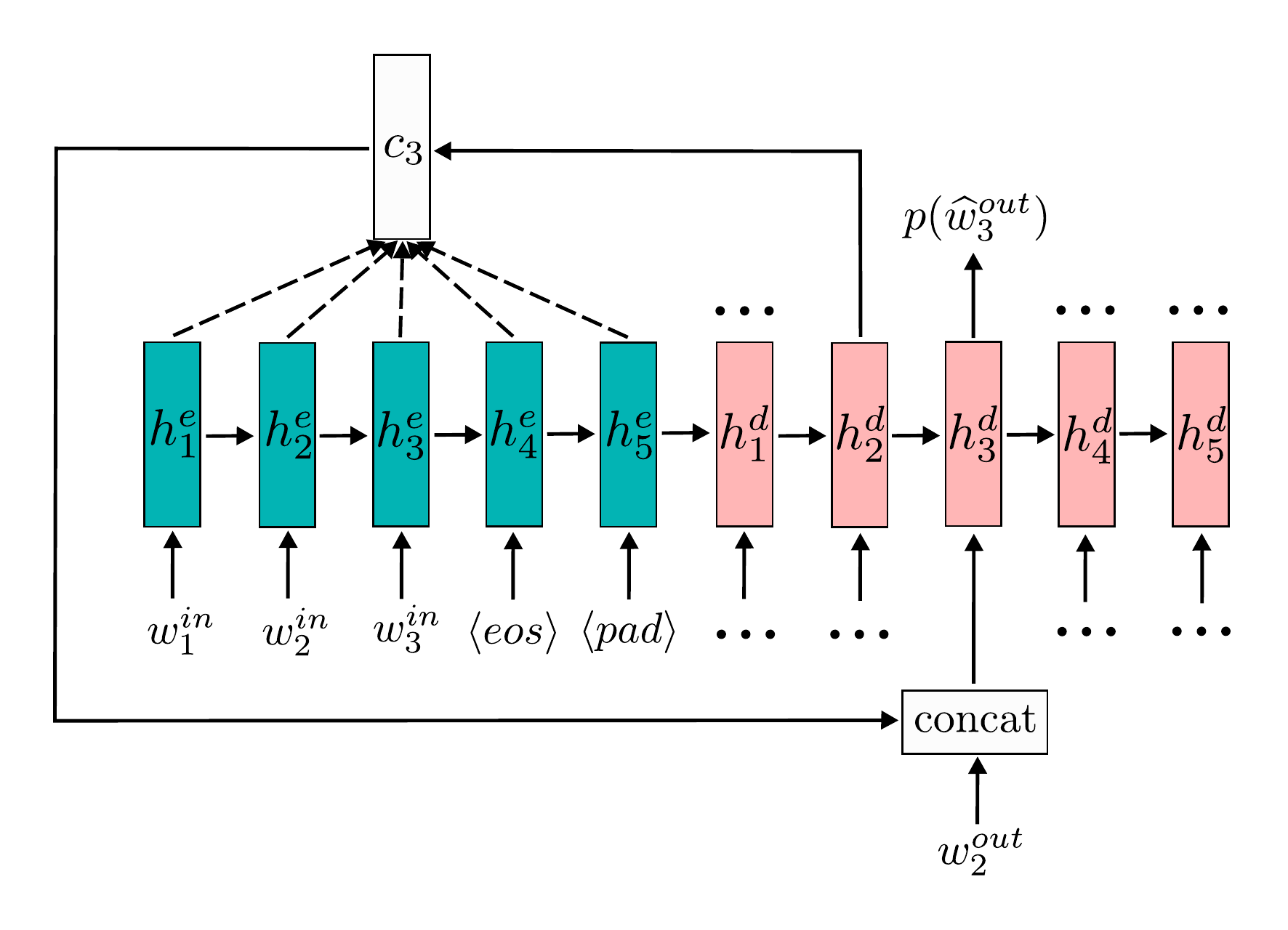}}
\caption{Sketch of the architecture used for Neural Machine Translation experiments. For  ease of illustration we use $5$ as maximal input and output length. $w_i^{in}, w_i^{out}$ are input and output word embeddings respectively, $\langle eos \rangle$ and $\langle pad \rangle$ denote embeddings of ``end of sentence" and ``padding" tag respectively. We use two different RNN units for the encoder rollout $h_1^e \to \dots \to h_5^e$ (blue) and decoder rollout $h_1^d \to \dots \to h_5^d$ (pink). We illustrate how the distribution of predicted output word $\widehat{w}_3^{out}$ is computed, other output words are processed similarly. Given $h_2^d$, the context vector $c_3 \in \mathbb{R}^N$ is computed as $\sum_i \alpha_i h_i^e$ where $\sum_i \alpha_i = 1$, $\alpha_i \propto \exp (v^\top \mathrm{tanh} (W_1 h_i^e + W_2 h_2^d))$, $v \in \mathbb{R}^N, W_1, W_2 \in \mathbb{R}^{N \times N}$ are learnable parameters. Then $c_3$ is concatenated with previous word embedding ($w_2^{out}$ or null tag embedding for the first predicted word) and passed into decoder RNN as input. Decoder RNN output ($h_3^d$) is passed through linear layer + softmax to obtain a distribution over $\widehat{w}_3^{out}$.}
\label{fig:att}
\end{center}
\end{figure}

\begin{figure}
\begin{center}
\centerline{\includegraphics[width=\linewidth]{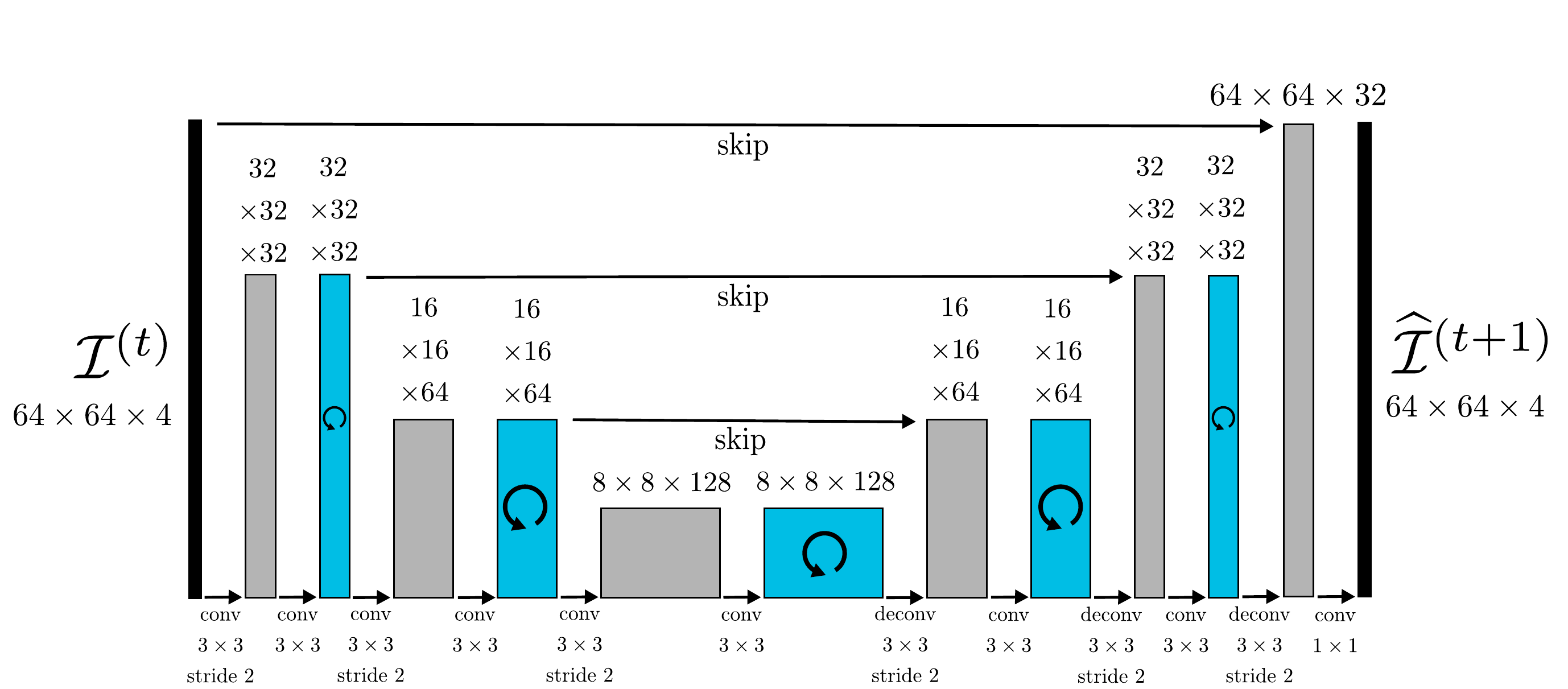}}
\caption{Sketch of the architecture used for video prediction experiments. Blue and grey blocks illustrate hidden representations with and without recurrent connections respectively. We compare different designs of the blue block (ConvLSTM, ConvNERU). In our comparison we try different designs of blue recurrent units with everything else unchanged. As in the original papers \citep{arch1,arch2}, we find that ConvLSTM version works best when instance normalization \citep{instance} is added after each convolution and before the nonlinearity, including convolutions inside ConvLSTM. We don't use instance normalization with other model variants.}
\label{fig:arch}
\end{center}
\vskip -0.2in
\end{figure}

\section{PROOFS} \label{sec:proofs}

\subsection{Theorem 1}

\begin{proof}
The proof proceeds by induction in $N$. For $N = 1$ such $Q$ is unique and is equal to $\begin{bmatrix} -1 \end{bmatrix}$. So simply take $u_1 = \begin{bmatrix} -1 \end{bmatrix}$. Now assume the statement is true for $N = k - 1 \geq 1$. When $N = k > 1$ we consider $Q$'s first column $q = \begin{bmatrix} q_1 & \dots & q_N \end{bmatrix}^\top$ and define a vector $v \in \mathbb{R}^k$ as follows:
\begin{equation} \label{eq:cases}
    v = \begin{cases}
        \frac{q - e^{(1)}}{\| q - e^{(1)}\|} & \text{if } | q_1 | < 1 \\
        \begin{bmatrix} 0 & \dots & 0 & 1 \end{bmatrix}^\top & \text{if } q_1 = 1 \\
        e^{(1)} & \text{if } q_1 = -1
    \end{cases}
\end{equation}
Observe that
\begin{equation} \label{eq:oneh}
    H(v) Q = \begin{bmatrix} 1 & r^\top \\ \mathbf{0} & Q' \end{bmatrix}
\end{equation}
for some $r \in \mathbb{R}^{k - 1}$. From the fact that $H(v) Q \in \mathcal{O}(k)$ we deduce:
\begin{equation} \label{eq:step}
    \begin{bmatrix} 1 & \mathbf{0} \\ r & Q'^\top \end{bmatrix} \begin{bmatrix} 1 & r^\top \\ \mathbf{0} & Q' \end{bmatrix} = \begin{bmatrix} 1 & r^\top \\ r & Q'^\top Q' + r r^\top \end{bmatrix} = I
\end{equation}
Hence, $r = \mathbf{0}$ and $Q' \in \mathcal{O}(k - 1)$. By Sylvester determinant identity $\det (I - 2 v v^\top) = 1 - 2 v^\top v = -1$, therefore $\det Q' = (-1)^{k - 1}$. By the induction step assumption there exist nonzero $v'^{(1)}, \dots, v'^{(k - 1)} \in \mathbb{R}^{k - 1}$ s.t.
\begin{equation*}
    Q' = H(v'^{(1)}) \dots H(v'^{(k - 1)})
\end{equation*}
We define $v^{(2)} = \begin{bmatrix} 0 & {v'^{(1)}}^\top \end{bmatrix}^\top, \dots, v^{(k)} = \begin{bmatrix} 0 & {v'^{(k - 1)}}^\top \end{bmatrix}^\top$ and obtain that
\begin{equation} \label{eq:hdecN-1}
    H (v) Q = H(v^{(2)}) \dots H (v^{(k)})
\end{equation}
Finally, we define $v^{(1)} = v$, left-multiply (\ref{eq:hdecN-1}) by $H (v^{(1)})$ and complete the induction step.
\end{proof}

\subsection{Theorem 2}

\begin{proof}
First, observe that $S$ is upper-triangular matrix with $\frac12$ on the diagonal. Hence, it is nonsingular and the Theorem statement is valid. Now the proof proceeds by induction in $L$. For $L = 1$ Theorem is trivial. Suppose Theorem is true for $L = k - 1 \geq 1$. Then the following is true:
\begin{equation*}
    H(v^{(1)}) \dots H(v^{(k - 1)}) = I - U' S'^{-1} U'^\top
\end{equation*}
where $U' = \begin{bmatrix} \frac{v^{(1)}}{\| v^{(1)} \|_2} & \dots \frac{v^{(k - 1)}}{\| v^{(k - 1)} \|_2} \end{bmatrix}$ and
\begin{equation*}
    S' = \frac{1}{2} I + \text{striu} (U'^\top U')
\end{equation*}
Then for $L = k$ we get:
\begin{align*}
    H(v^{(1)}) \dots H(v^{(k)}) &= (I - U' S'^{-1} U'^\top) H(v^{(k)}) \\
    &= I - U' S'^{-1} U'^\top - 2 \frac{v^{(k)} {v^{(k)}}^\top}{\| v^{(k)} \|_2^2} + 2 U' S'^{-1} U'^\top \frac{v^{(k)} {v^{(k)}}^\top}{\| v^{(k)} \|_2^2} \\
    &= I - U \begin{bmatrix} S'^{-1} & - 2 S'^{-1} U'^\top \frac{v^{(k)}}{\| v^{(k)} \|_2^2} \\ \mathbf{0} & 2 \end{bmatrix} U^\top
\end{align*}
And the step of induction is completed by observing that
\begin{align*}
    &\begin{bmatrix} S'^{-1} & - 2 S'^{-1} U'^\top \frac{v^{(k)}}{\| v^{(k)} \|_2^2} \\ \mathbf{0} & 2 \end{bmatrix} \times S = \begin{bmatrix} S'^{-1} & - 2 S'^{-1} U'^\top \frac{v^{(k)}}{\| v^{(k)} \|_2^2} \\ \mathbf{0} & 2 \end{bmatrix} \times \begin{bmatrix} S' & U'^\top \frac{v^{(k)}}{\| v^{(k)} \|_2^2} \\ \mathbf{0} & \frac12 \end{bmatrix} = I
\end{align*}
\end{proof}

\subsection{Theorem 3}

\begin{proof}
Similarly to Theorem 2, observe that $S$ is upper-triangular matrix with $\frac12$ on the diagonal. Hence, it is nonsingular and Theorem's statement is valid.

Observe that for any nonzero vectors $v^{(1)}, \dots v^{(M)} \in \mathbb{R}^N$
\begin{align*}
    \biggl(\begin{bmatrix} I \\ \mathbf{0} \end{bmatrix} - U S^{-1} U_1^\top \biggr)^\top \biggl( \begin{bmatrix} I \\ \mathbf{0} \end{bmatrix} - U S^{-1} U_1^\top \biggr) &= I + U_1 \biggl(S^{-\top} U^\top U S^{-1} - S^{-1} - S^{-\top} \biggr) U_1^\top \\
    &= I + U_1 S^{-\top} \biggl(U^\top U - S^\top - S \biggr) S^{-1} U_1^\top = I
\end{align*}
Hence, $\gamma_{N,M} (v^{(1)}, \dots v^{(M)}) \in \text{St} (N, M)$. To show surjectivity of $\gamma_{N,M}$, consider arbitarary $\Omega \in \text{St}(N, M)$.
Let $q = \begin{bmatrix} q_1 & \dots & q_N \end{bmatrix}^\top$ be $\Omega$'s first column. We consider value $v$ defined by (\ref{eq:cases}). Using derivations similar to (\ref{eq:oneh}-\ref{eq:step}), we obtain:
\begin{equation*}
    H(v) \Omega = \begin{bmatrix} 1 & \mathbf{0} \\ \mathbf{0} & \Omega' \end{bmatrix}
\end{equation*}
where $\Omega' \in \text{St} (N - 1, M - 1)$.

Set $v^{(1)} = v$. Analogously find $v'$ for $\Omega'$ such that
\begin{equation*}
    H(v') \Omega' = \begin{bmatrix} 1 & \mathbf{0} \\ \mathbf{0} & \Omega'' \end{bmatrix}
\end{equation*}
and set $v^{(2)} = \begin{bmatrix} 0 & v'^\top \end{bmatrix}^\top$. Repeat this procedure $M - 2$ more times to obtain:
\begin{equation} \label{eq:dec1}
    H (v^{(M)}) \dots H(v^{(1)}) \Omega = \begin{bmatrix} I \\ \mathbf{0} \end{bmatrix}
\end{equation}
Left-multiply (\ref{eq:dec1}) by $H (v^{(1)}) \dots H(v^{(M)})$:
\begin{equation*}
    \Omega = H (v^{(1)}) \dots H(v^{(M)}) \begin{bmatrix} I \\ \mathbf{0} \end{bmatrix}
\end{equation*}
Finally, apply Theorem 2 for series of Householder reflections $H (v^{(1)}) \dots H(v^{(M)})$:
\begin{align*}
    \Omega = \biggl( I - U S^{-1} U^\top \biggr) \begin{bmatrix} I \\ \mathbf{0} \end{bmatrix} = \begin{bmatrix} I \\ \mathbf{0} \end{bmatrix} - U S^{-1} U_1^\top = \gamma_{N,M} (v^{(1)}, \dots, v^{(M)})
\end{align*}
which justifies surjectivity of $\gamma_{N,M}$.
\end{proof}

\subsection{Theorem \ref{th:conv}} \label{sec:th4pr}

Before providing results which build to the complete proof, we first give a \textbf{high-level sketch} to aid intuition. Lemma \ref{lemma:def} shows that, for any iteration of SGD, $v^{(1)}, \dots, v^{(L)}$ stay in a region $\mathcal{S} = \{ x \in \mathbb{R}^N \, | \, \| x \|_2 > A \}$, where $A > 0$ is some fixed number. Lemma \ref{lemma:gl} shows that the composition of $f$ and CWY has Lipschitz-continuous gradients in $\mathcal{S}$. Next, Lemma \ref{lemma:varbnd} shows that the gradient proxy has bounded variance in $\mathcal{S}$. The proof itself is essentially Theorem 4.10 from (Bottou et al., 2016), which uses Lipschitz continuity and boundedness to establish SGD convergence guarantees.


\begin{lemma} \label{lemma:def}
Suppose conditions of Theorem \ref{th:conv} hold. Since all $v^{(0,1)}, \dots, v^{(0,L)}$ are nonzero, there exists a number $A > 0$ such that for all $l \in \{ 1, \dots, L \}: A < \| v^{(0,l)} \|_2$. Define a set $\mathcal{S} = \{ x \in \mathbb{R}^N \, | \, \| x \|_2 > A \}$. Then for each $k \geq 0$ $v^{(k,1)}, \dots, v^{(k,L)}$ are well-defined and lie in $\mathcal{S}$.
\end{lemma}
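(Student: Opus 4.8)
The plan is to argue by induction on $k$, the crucial ingredient being that each Householder reflection $H(v) = I - 2 v v^\top / \| v \|_2^2$ is invariant under rescaling of its argument: $H(cv) = H(v)$ for every $c \neq 0$, as is immediate from the definition. This scale-invariance forces the gradient of the objective with respect to any single $v^{(l)}$ to be orthogonal to $v^{(l)}$ itself, and since an update direction orthogonal to the current iterate can only increase its Euclidean norm, the norms, starting strictly above $A$, never fall back to $A$.

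First I would make the orthogonality precise. Fix $k$ and $l$, freeze all vectors except the $l$-th, and define $\phi(x) = \widetilde{f}(H(v^{(k-1,1)}) \dots H(x) \dots H(v^{(k-1,L)}))$, where $H(x)$ sits in the $l$-th slot. Because $H(cx) = H(x)$, the map $\phi$ is constant along every ray through the origin, i.e. $\phi(cx) = \phi(x)$ for all $c \neq 0$. Differentiating this identity in $c$ at $c = 1$ by the chain rule gives $\nabla \phi(x)^\top x = 0$, so wherever $\phi$ is differentiable its gradient is perpendicular to $x$. In particular $\nabla_{v^{(k-1,l)}} \widetilde{f}(\dots) \perp v^{(k-1,l)}$.

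Next I would run the induction. The base case $k = 0$ holds by the choice of $A$, since $\| v^{(0,l)} \|_2 > A$ gives $v^{(0,l)} \in \mathcal{S}$ for every $l$. For the inductive step, assume $v^{(k-1,l)} \in \mathcal{S}$ for all $l$; then each $v^{(k-1,l)}$ is nonzero, so every $H(v^{(k-1,l)})$, and hence the gradient in the update rule, is well-defined, which makes $v^{(k,l)}$ well-defined. Writing $g = \nabla_{v^{(k-1,l)}} \widetilde{f}(\dots)$ and using $g \perp v^{(k-1,l)}$, the Pythagorean identity yields
\[
\| v^{(k,l)} \|_2^2 = \| v^{(k-1,l)} - k^{-0.5} g \|_2^2 = \| v^{(k-1,l)} \|_2^2 + k^{-1} \| g \|_2^2 \geq \| v^{(k-1,l)} \|_2^2 > A^2 ,
\]
so $v^{(k,l)} \in \mathcal{S}$, closing the induction.

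I do not anticipate a serious obstacle: the whole argument hinges on the scale-invariance of $H(\cdot)$, which is trivial. The only point needing care is the well-definedness bookkeeping inside the induction --- one must confirm at each step that the previous iterates are nonzero before invoking the gradient update, which is exactly what the hypothesis $v^{(k-1,l)} \in \mathcal{S}$ (hence $\| v^{(k-1,l)} \|_2 > A > 0$) supplies. I would also note that the monotonicity holds for any positive step size, not merely $k^{-0.5}$, since it is orthogonality of $g$ to $v^{(k-1,l)}$, not the particular schedule, that drives the inequality.
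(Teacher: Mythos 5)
Your proof is correct and follows essentially the same route as the paper's: induction on $k$, using the scale-invariance $H(cv)=H(v)$ to show the gradient is orthogonal to the current iterate, and then the Pythagorean identity to conclude the norm cannot decrease. The only cosmetic difference is that you obtain the orthogonality by differentiating the degree-zero homogeneity identity $\phi(cx)=\phi(x)$ in $c$, whereas the paper computes the gradient explicitly through the normalization map $s(v)=v/\|v\|_2$ and reads off $v^\top\nabla_v g=0$ from the projector $I-s(v)s(v)^\top$; both yield the same fact.
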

\begin{proof}
The statement is true for $k = 0$. Suppose it's true for $k - 1$. Since $v^{(k - 1,1)}, \dots, v^{(k - 1,L)}$ are nonzero, $v^{(k,1)}, \dots, v^{(k,L)}$ are well-defined. Fix $l \in \{ 1, \dots, L \}$. Observe that for any nonzero $v \in \mathbb{R}^N$: $H(v) = H (\frac{v}{\| v \|_2})$. Hence, $\widetilde{f} (H(v^{(k - 1,1)}) \dots H(v^{(k - 1,L)}))$ can be represented as a function $g (\frac{v^{(k - 1,l)}}{\| v^{(k - 1,l)} \|_2})$ so that
\begin{equation*}
    \nabla_{v^{(k - 1,l)}} \widetilde{f} (H(v^{(k - 1,1)}) \dots H(v^{(k - 1,L)})) = \nabla_{v^{(k - 1,l)}} g (\frac{v^{(k - 1,l)}}{\| v^{(k - 1,l)} \|_2}).
\end{equation*}
Denote $s (v) = \frac{v}{\| v \|_2}$. Then
\begin{equation*}
    \nabla_v g (s(v)) = \frac{1}{\| v \|_2} (I - s(v) s(v)^\top) \nabla_s g(s(v))
\end{equation*}
and, hence, $v^\top \nabla_v g (\frac{v}{\| v \|_2}) = 0$. We use it to derive that for any $\eta \in \mathbb{R}$
\begin{align}
    \| v^{(k - 1,l)} &- \eta \nabla_{v^{(k - 1,l)}} g (\frac{v^{(k - 1,l)}}{\| v^{(k - 1,l)} \|_2}) \|_2^2 = \| v^{(k - 1,l)} \|_2^2 + \| \eta \nabla_{v^{(k - 1,l)}} g (\frac{v^{(k - 1,l)}}{\| v^{(k - 1,l)} \|_2}) \|_2^2 - 2 \eta v^{(k - 1,l)\top} g (\frac{v^{(k - 1,l)}}{\| v^{(k - 1,l)} \|_2}) \nonumber \\
    &= \| v^{(k - 1,l)} \|_2^2 + \| \eta \nabla_{v^{(k - 1,l)}} g (\frac{v^{(k - 1,l)}}{\| v^{(k - 1,l)} \|_2}) \|_2^2 \geq \| v^{(k - 1,l)} \|_2^2 > A^2 > 0 . \label{eq:etadef}
\end{align}
In particular, by setting $\eta = k^{-0.5}$ and observing that $v^{(k,l)} = v^{(k - 1,l)} - k^{-0.5} \nabla_{v^{(k - 1,l)}} g (\frac{v^{(k - 1,l)}}{\| v^{(k - 1,l)} \|_2})$ we conclude that $\| v^{(k,l)} \|_2 > A$ so the step of induction is completed.
\end{proof}

\begin{lemma} \label{lemma:gl}
Suppose conditions of  Theorem \ref{th:conv} (and, hence, of Lemma \ref{lemma:def}) hold. Fix $k > 0$. According to (\ref{eq:etadef}) we can define a function $h: \mathbb{R} \to \mathbb{R}$ as
\begin{align*}
    h(\eta) &= f \biggl( H (v^{(1)} (\eta)) \dots H(v^{(L)} (\eta)) \biggr), \quad \forall l \in \{ 1, \dots, L \} : \\
    v^{(l)} (\eta) &= v^{(k - 1,l)} - \eta \cdot \nabla_{v^{(k - 1,l)}} \widetilde{f} \biggl( H(v^{(k - 1,1)}) \dots H(v^{(k - 1,L)}) \biggr) .
\end{align*}
Then
\begin{equation*}
    | \nabla h(\eta) - \nabla h(0) | \leq \mathcal{C} \widetilde{M} \eta,
\end{equation*}
where
\begin{gather*}
    \mathcal{C} = \frac{2 L}{A^2} \biggl( 5 \sqrt{6 M_2 (N + 2)} + \sqrt{2 M_2  + 48 M_1^2 (N + 2)} (\sqrt{2 (N + 60)} + 8 \sqrt{6 N (N + 2)}) \biggr), \\
    \widetilde{M} = \sum_{l = 1}^L \| \nabla_{v^{(k - 1,l)}} \widetilde{f} (H(v^{(k - 1,1)}) \dots H(v^{(k - 1,L)})) \|_2^2 .
\end{gather*}
\end{lemma}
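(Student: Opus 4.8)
The plan is to express $h'(\eta)$ through the chain rule and reduce the claim to Lipschitz-in-$\eta$ bounds on the orthogonal curve $P(\eta) = H(v^{(1)}(\eta)) \dots H(v^{(L)}(\eta))$ and its derivative. Write $d^{(l)} = \nabla_{v^{(k-1,l)}} \widetilde f (H(v^{(k-1,1)}) \dots H(v^{(k-1,L)}))$, so $v^{(l)}(\eta) = v^{(k-1,l)} - \eta d^{(l)}$ and $\widetilde M = \sum_{l=1}^L \| d^{(l)} \|_2^2$. Two facts make everything work. First, each $H(v^{(l)}(\eta))$ is a reflection, so $P(\eta) \in \mathcal O(N)$ for every $\eta$; hence both hypotheses of Theorem \ref{th:conv} (Lipschitz $\nabla f$ and the variance bound) apply at $P(\eta)$ and $P(0)$. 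Second, Lemma \ref{lemma:def} together with the orthogonality $v^{(k-1,l)\top} d^{(l)} = 0$ (shown in its proof) gives $\| v^{(l)}(\eta) \|_2 > A$ uniformly in $\eta$, which controls every denominator arising when differentiating $H(v^{(l)}(\eta))$ and guarantees $h$ is differentiable.

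By the chain rule $h'(\eta) = \mathrm{Tr}(\nabla f(P(\eta))^\top P'(\eta))$, and I would split
\begin{align*}
h'(\eta) - h'(0) &= \mathrm{Tr}\bigl( \nabla f(P(\eta))^\top (P'(\eta) - P'(0)) \bigr) \\
&\quad + \mathrm{Tr}\bigl( (\nabla f(P(\eta)) - \nabla f(P(0)))^\top P'(0) \bigr) .
\end{align*}
The second summand is at most $M_1 \| P(\eta) - P(0) \|_F \, \| P'(0) \|_F$ by Cauchy--Schwarz and the Lipschitz hypothesis. For the first, I bound $\| \nabla f(P(\eta)) \|_F \le \| \nabla f(P(0)) \|_F + M_1 \| P(\eta) - P(0) \|_F$, where $\| \nabla f(P(0)) \|_F \le \sqrt{M_2}$ follows from Jensen's inequality applied to $\nabla f = \mathbb E \nabla \widetilde f$ and the variance bound, and $\| P(\eta) - P(0) \|_F \le 2\sqrt N$ since both matrices are orthogonal with Frobenius norm $\sqrt N$. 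Everything then reduces to three matrix-calculus quantities: $\| P'(0) \|_F$, $\| P(\eta) - P(0) \|_F$, and $\| P'(\eta) - P'(0) \|_F$.

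These I would obtain by differentiating the product. With $\dot{H}_l = \tfrac{d}{d\eta} H(v^{(l)}(\eta))$, the product rule gives $P'(\eta) = \sum_l (\prod_{j<l} H_j) \dot{H}_l (\prod_{j>l} H_j)$, and since multiplication by orthogonal matrices preserves the Frobenius norm, $\| P'(\eta) \|_F \le \sum_l \| \dot{H}_l \|_F$. Writing $w = v^{(l)}(\eta)/\| v^{(l)}(\eta) \|_2$ and using $w^\top \dot{w} = 0$ with $\| v^{(l)}(\eta) \|_2 > A$, a direct computation yields $\| \dot{H}_l \|_F \le c \, \| d^{(l)} \|_2 / A$ for an absolute constant $c$; integrating gives $\| P(\eta) - P(0) \|_F \le \eta (c/A) \sum_l \| d^{(l)} \|_2$. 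For $\| P'(\eta) - P'(0) \|_F$ I would integrate $P''$, a sum of terms that either differentiate two distinct factors once or one factor twice; each is controlled via $\| v^{(l)}(\eta) \|_2 > A$ by a constant times $\| d^{(l)} \|_2 \| d^{(l')} \|_2 / A^2$ or $\| d^{(l)} \|_2^2 / A^2$, yielding $\| P'(\eta) - P'(0) \|_F \le \eta (c'/A^2) (\sum_l \| d^{(l)} \|_2)^2$. Finally Cauchy--Schwarz gives $(\sum_l \| d^{(l)} \|_2)^2 \le L \sum_l \| d^{(l)} \|_2^2 = L \widetilde M$, which produces the prefactor $2L/A^2$ and the exact dependence on $\widetilde M$ and $\eta$ in the claim.

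The main obstacle is purely the explicit constant $\mathcal C$: assembling the three estimates above into a single closed form while tracking every numerical factor. The dimensional terms $N+2$, $N+60$, $N(N+2)$ do not come from the $\eta$-proportional, rank-bounded pieces (which are dimension-free) but from the deliberately coarse estimates chosen to keep the algebra closed --- chiefly the bound $\| P(\eta) - P(0) \|_F \le 2\sqrt N$ inside the $\| \nabla f(P(\eta)) \|_F$ term and similarly coarse Frobenius bounds on the first and second derivatives of individual Householder reflections. Choosing at each step whether to use the dimension-free $\eta$-bound or the $\sqrt N$-bound so that every summand carries exactly one factor of $\eta$ and the correct power of $\| d^{(l)} \|_2$, and then collecting terms into the stated $\mathcal C$, is where essentially all the bookkeeping lives.
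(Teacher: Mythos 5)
Your proof is correct in structure and would establish the Lipschitz bound, but it follows a genuinely different route from the paper. The paper never forms the curve $P(\eta)$ and its $\eta$-derivatives explicitly; instead it writes $\nabla h(\eta) = -\sum_l d^{(l)\top}\nabla_{v^{(l)}} f(\cdot)$, applies Cauchy--Schwarz over $l$ to reduce everything to the per-vector differences $\| \nabla_{v^{(l)}} f(H(v^{(1)}(\eta))\cdots) - \nabla_{v^{(k-1,l)}} f(H(v^{(k-1,1)})\cdots) \|_2$, and then controls each of these through the factorization $\nabla_v = \tfrac{1}{\|v\|_2}(I - s s^\top)\nabla_s$, explicit bounds $\| \nabla_s H(s) \|_F^2 \le 24(N+2)$ and $\| \nabla_{s'}H(s') - \nabla_{s''}H(s'')\|_F^2 \le 8(N+60)\|s'-s''\|_2^2$, and a telescoping product argument to compare the two Householder chains factor by factor. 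Your decomposition $h'(\eta)-h'(0) = \mathrm{Tr}(\nabla f(P(\eta))^\top(P'(\eta)-P'(0))) + \mathrm{Tr}((\nabla f(P(\eta))-\nabla f(P(0)))^\top P'(0))$, combined with $\| \dot H_l \|_F \le c\|d^{(l)}\|_2/A$, $\|\ddot H_l\|_F \le c\|d^{(l)}\|_2^2/A^2$ and integration of $P''$, is cleaner: orthogonal invariance of the Frobenius norm makes the derivative estimates essentially dimension-free, whereas the paper's coordinate-wise bounds on $\nabla_s H$ inject the $N+2$, $N+60$ and $N(N+2)$ factors. The one caveat is that the lemma as stated pins down the specific constant $\mathcal{C}$; your bookkeeping would produce a different (and, in its $N$-dependence, likely smaller) constant, so strictly you prove a variant of the lemma rather than the displayed $\mathcal{C}$. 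Since Theorem \ref{th:conv} only uses that $\mathcal{C}$ is some finite constant independent of $k$, this is immaterial downstream, but it is worth stating explicitly that you are replacing $\mathcal{C}$ rather than reproducing it.
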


\begin{proof}
Observe that due to (\ref{eq:etadef}) $\| v^{(l)} (\eta) \|_2 > A$. By applying a chain rule we deduce that
\begin{align*}
    \nabla h(\eta) = - \sum_{l = 1}^L \nabla_{v^{(k - 1,l)}} \widetilde{f} (H(v^{(k - 1,1)}) \dots H(v^{(k - 1,L)}))^\top \nabla_{v^{(l)}} f (H(v^{(1)} (\eta)) \dots H(v^{(L)} (\eta))) .
\end{align*}
Next, we derive that
\begin{align}
    &| \nabla h(\eta) - \nabla h(0) | = | \sum_{l = 1}^L \nabla_{v^{(k - 1,l)}} \widetilde{f} (H(v^{(k - 1,1)}) \dots H(v^{(k - 1,L)}))^\top \nonumber \\
    &\times \biggl( \nabla_{v^{(l)}} f (H(v^{(1)} (\eta)) \dots H(v^{(L)} (\eta))) - \nabla_{v^{(k - 1,l)}} f (H(v^{(k - 1,1)}) \dots H(v^{(k - 1,L)})) \biggr) | \nonumber \\
    &\leq \sqrt{\widetilde{M}} \cdot \sqrt{\sum_{l = 1}^L \| \nabla_{v^{(l)}} f (H(v^{(1)} (\eta)) \dots H(v^{(L)} (\eta))) - \nabla_{v^{(k - 1,l)}} f (H(v^{(k - 1,1)}) \dots H(v^{(k - 1,L)})) \|_2^2} \label{eq:bnd1}
\end{align}
where we use the Cauchy-Schwarz inequality. Fix $l \in \{ 1, \dots, L \}$ and let $g' (v^{(l)})$, $g'' (v^{(k - 1,l)})$ be $f (H(v^{(1)} (\eta)) \dots H(v^{(L)} (\eta)))$ and $f (H(v^{(k - 1,1)}) \dots H(v^{(k - 1,L)}))$ represented as functions of $v^{(l)}$ and $v^{(k - 1,l)}$ respectively. Then
\begin{gather*}
    \nabla_{v^{(l)}} f (H(v^{(1)} (\eta)) \dots H(v^{(L)} (\eta))) = \nabla_{v^{(l)}} g' (v^{(l)}) = \frac{1}{\| v^{(l)} \|_2} (I - s(v^{(l)}) s(v^{(l)})^\top) \nabla_s g'(s(v^{(l)})), \\
    \nabla_{v^{(k - 1,l)}} f (H(v^{(k - 1,1)}) \dots H(v^{(k - 1,L)})) = \nabla_{v^{(k - 1,l)}} g'' (v^{(k - 1,l)}) \\
    = \frac{1}{\| v^{(k - 1,l)} \|_2} (I - s(v^{(k - 1,l)}) s(v^{(k - 1,l)})^\top) \nabla_s g''(s(v^{(k - 1,l)})).
\end{gather*}
While $l$ is fixed denote $v' = v^{(l)}$ and $v'' = v^{(k - 1,l)}$. Then we have:
\begin{align}
    &\| \nabla_{v^{(l)}} g' (v^{(l)}) - \nabla_{v^{(k - 1,l)}} g'' (v^{(k - 1,l)}) \|_2 = \| \frac{1}{\| v' \|_2} (I - s(v') s(v')^\top) \nabla_s g'(s(v')) \nonumber \\
    &- \frac{1}{\| v'' \|_2} (I - s(v'') s(v'')^\top) \nabla_s g''(s(v'')) \|_2 \nonumber \\
    &= \| \frac{1}{\| v' \|_2} (I - s(v') s(v')^\top) \nabla_s g'(s(v')) - \frac{1}{\| v'' \|_2} (I - s(v') s(v')^\top) \nabla_s g'(s(v')) \nonumber \\
    &+ \frac{1}{\| v'' \|_2} (I - s(v') s(v')^\top) \nabla_s g'(s(v')) - \frac{1}{\| v'' \|_2} (I - s(v'') s(v'')^\top) \nabla_s g''(s(v'')) \|_2 \nonumber \\
    &\leq \| \frac{1}{\| v' \|_2} (I - s(v') s(v')^\top) \nabla_s g'(s(v')) - \frac{1}{\| v'' \|_2} (I - s(v') s(v')^\top) \nabla_s g'(s(v')) \|_2 \nonumber \\
    &+ \| \frac{1}{\| v'' \|_2} (I - s(v') s(v')^\top) \nabla_s g'(s(v')) - \frac{1}{\| v'' \|_2} (I - s(v'') s(v'')^\top) \nabla_s g''(s(v'')) \|_2 \nonumber \\
    &\leq | \frac{1}{\| v' \|_2} - \frac{1}{\| v'' \|_2} | \| (I - s(v') s(v')^\top) \nabla_s g'(s(v')) \|_2 \nonumber \\
    &+ \frac{1}{\| v'' \|_2} \| (I - s(v') s(v')^\top) \nabla_s g'(s(v')) - (I - s(v'') s(v'')^\top) \nabla_s g''(s(v'')) \|_2 \nonumber \\
    &\leq \frac{1}{A^2} \| v' - v'' \|_2 \| \nabla_s g'(s(v')) \|_2 + \frac{1}{A} \| (I - s(v') s(v')^\top) \nabla_s g'(s(v')) - (I - s(v'') s(v'')^\top) \nabla_s g''(s(v'')) \|_2 \nonumber \\
    &= \frac{1}{A^2} \| v' - v'' \|_2 \| \nabla_s g'(s(v')) \|_2 + \frac{1}{A} \| (I - s(v') s(v')^\top) \nabla_s g'(s(v')) \nonumber \\
    &- (I - s(v') s(v')^\top) \nabla_s g''(s(v'')) + (I - s(v') s(v')^\top) \nabla_s g''(s(v'')) - (I - s(v'') s(v'')^\top) \nabla_s g''(s(v'')) \|_2 \nonumber \\
    &\leq \frac{1}{A^2} \| v' - v'' \|_2 \| \nabla_s g'(s(v')) \|_2 + \frac{1}{A} \| (I - s(v') s(v')^\top) \biggl( \nabla_s g'(s(v')) -  \nabla_s g''(s(v'')) \biggr) \|_2 \nonumber \\
    &+ \frac{1}{A} \| \biggl( (I - s(v') s(v')^\top) - (I - s(v'') s(v'')^\top) \biggr) \nabla_s g''(s(v'')) \|_2 \nonumber \\
    &\leq \frac{1}{A^2} \| v' - v'' \|_2 \| \nabla_s g'(s(v')) \|_2 + \frac{1}{A} \| \nabla_s g'(s(v')) -  \nabla_s g''(s(v'')) \|_2 \nonumber \\
    &+ \frac{1}{A} \| (s(v') s(v')^\top - s(v'') s(v'')^\top) \nabla_s g''(s(v'')) \|_2 \nonumber \\
    &\leq \frac{1}{A^2} \| v' - v'' \|_2 \| \nabla_s g'(s(v')) \|_2 + \frac{1}{A} \| \nabla_s g'(s(v')) -  \nabla_s g''(s(v'')) \|_2 \nonumber \\
    &+ \frac{1}{A} \| s(v') s(v')^\top - s(v'') s(v'')^\top \|_2 \| \nabla_s g''(s(v'')) \|_2 \nonumber \\
    &\leq \frac{1}{A^2} \| v' - v'' \|_2 \| \nabla_s g'(s(v')) \|_2 + \frac{1}{A} \| \nabla_s g'(s(v')) -  \nabla_s g''(s(v'')) \|_2 \nonumber \\
    &+ \frac{1}{A} \| s(v') s(v')^\top - s(v') s(v'')^\top + s(v') s(v'')^\top - s(v'') s(v'')^\top \|_2 \| \nabla_s g''(s(v'')) \|_2 \nonumber \\
    &\leq \frac{1}{A^2} \| v' - v'' \|_2 \| \nabla_s g'(s(v')) \|_2 + \frac{1}{A} \| \nabla_s g'(s(v')) -  \nabla_s g''(s(v'')) \|_2 \nonumber \\
    &+ \frac{1}{A} \| s(v') ( s(v') - s(v'') )^\top \|_2 \| \nabla_s g''(s(v'')) \|_2 + \frac{1}{A} \| ( s(v') - s(v'') ) s(v'')^\top \|_2 \| \nabla_s g''(s(v'')) \|_2 \nonumber \\
    &\leq \frac{1}{A^2} \| v' - v'' \|_2 \| \nabla_s g'(s(v')) \|_2 + \frac{1}{A} \| \nabla_s g'(s(v')) -  \nabla_s g''(s(v'')) \|_2 \nonumber \\
    &+ \frac{1}{A} \| s(v') \|_2 \| s(v') - s(v'') \|_2 \| \nabla_s g''(s(v'')) \|_2 + \frac{1}{A} \| ( s(v') - s(v'') \|_2 \| s(v'') \|_2 \| \nabla_s g''(s(v'')) \|_2 \nonumber \\
    &\leq \frac{1}{A^2} \| v' - v'' \|_2 \| \nabla_s g'(s(v')) \|_2 + \frac{1}{A} \| \nabla_s g'(s(v')) -  \nabla_s g''(s(v'')) \|_2 + \frac{2}{A} \| s(v') - s(v'') \|_2 \| \nabla_s g''(s(v'')) \|_2 \nonumber \\
    &\leq \frac{1}{A^2} \| v' - v'' \|_2 ( \| \nabla_s g'(s(v')) \|_2 + 4 \| \nabla_s g''(s(v'')) \|_2 ) + \frac{1}{A} \| \nabla_s g'(s(v')) -  \nabla_s g''(s(v'')) \|_2, \label{eq:bnd2}
\end{align}
where we use submultiplicativity of the matrix norm $\| \cdot \|_2$ and that for any $v, v', v'' \in \mathcal{S}, x \in \mathbb{R}^N$: a) $I - s(v) s(v)^\top$ is an orthogonal projection matrix and, therefore, $\| I - s(v) s(v)^\top \|_2 \leq 1$, b) $| \frac{1}{\| v' \|_2} - \frac{1}{\| v'' \|_2} | = \frac{1}{\| v' \|_2 \| v'' \|_2} | \| v' \|_2 - \| v'' \|_2 | \leq \frac{1}{A^2} \| v' - v'' \|_2$ and c)
\begin{gather*}
    \| s(v') - s(v'') \|_2 = \| \frac{v'}{\| v' \|_2} - \frac{v''}{\| v'' \|_2} \|_2 = \| \frac{v'}{\| v' \|_2} - \frac{v'}{\| v'' \|_2} + \frac{v'}{\| v'' \|_2} - \frac{v''}{\| v'' \|_2} \|_2 \\
    \leq | \frac{1}{\| v' \|_2} - \frac{1}{\| v'' \|_2} | \| v' \|_2 + \frac{1}{\| v'' \|_2} \| v' - v'' \|_2 = \frac{1}{\| v'' \|_2} | \| v' \|_2 - \| v'' \|_2 | + \frac{1}{\| v'' \|_2} \| v' - v'' \|_2 \\
    \leq \frac{2}{\| v'' \|_2} \| v' - v'' \|_2 \leq \frac{2}{A} \| v' - v'' \|_2 .
\end{gather*}
For $s \in \mathbb{R}^N, \| s \|_2 = 1$ let $s_i$ denote $i$'th position of vector $s$, $H_{j_1,j_2}$ denote $(j_1, j_2)$'th position of matrix $H$ and $[ \cdot ]$ denote indicator. Then
\begin{gather*}
    \nabla_{s_i} H(s)_{j_1, j_2} = \nabla_{s_i} (1 - 2 \frac{s_{j_1} s_{j_2}}{\| s \|_2^2}) \\
    = -2 \frac{((s_{j_1} + s_{j_2}) [j_1 = i] [j_2 = i] + s_{j_1} [j_2 = i] [j_1 \neq i] + s_{j_2} [j_1 = i][j_2 \neq i]) \| s \|_2^2 - 2 s_{j_1} s_{j_2} s_i }{\| s \|_2^4} \\
    = 4 s_{j_1} s_{j_2} s_i - 2 (s_{j_1} [j_2 = i] + s_{j_2} [j_1 = i]) .
\end{gather*}
We further obtain that
\begin{gather}
    \| \nabla_s H(s) \|_F^2 = \sum_{1 \leq i, j_1, j_2 \leq N} ( 4 s_{j_1} s_{j_2} s_i - 2 (s_{j_1} [j_2 = i] + s_{j_2} [j_1 = i]) )^2 \nonumber \\
    \leq 3 \sum_{1 \leq i, j_1, j_2 \leq N} (16 s_{j_1}^2 s_{j_2}^2 s_i^2 + 4 s_{j_1}^2 [j_2 = i] + 4 s_{j_2} [j_1 = i]) \nonumber \\
    = 48 \sum_{j_1 = 1}^N s_{j_1}^2 \sum_{j_2 = 1}^N s_{j_2}^2 \sum_{i = 1}^N s_i^2 + 12 \sum_{1 \leq i, j_1 \leq N} s_{j_1}^2 + 12 \sum_{1 \leq i, j_2 \leq N} s_{j_2}^2 = 24 ( N + 2 ), \label{eq:ghb1} \\
    \| \nabla_{s'} H(s') - \nabla_{s''} H(s'') \|_F^2 = \sum_{1 \leq i, j_1, j_2 \leq N} ( 4 s_{j_1}' s_{j_2}' s_i' - 4 s_{j_1}'' s_{j_2}'' s_i'' - 2 (s_{j_1}' - s_{j_1}'') [j_2 = i] \nonumber \\
    - 2 (s_{j_2}' - s_{j_2}'') [j_1 = i] )^2 \leq 3 \sum_{1 \leq i, j_1, j_2 \leq N} (16 (s_{j_1}' s_{j_2}' s_i' - s_{j_1}'' s_{j_2}'' s_i'')^2 + 4 (s_{j_1}' - s_{j_1}'')^2 [j_2 = i] \nonumber \\
    + 4 (s_{j_2}' - s_{j_2}'')^2 [j_1 = i] ) \leq 48 \sum_{1 \leq i, j_1, j_2 \leq N} (s_{j_1}' s_{j_2}' s_i' - s_{j_1}' s_{j_2}'' s_i'' + s_{j_1}' s_{j_2}'' s_i'' - s_{j_1}'' s_{j_2}'' s_i'')^2 \nonumber \\
    + 4 \sum_{1 \leq i, j_1 \leq N} (s_{j_1}' - s_{j_1}'')^2 + 4 \sum_{1 \leq i, j_2 \leq N} (s_{j_2}' - s_{j_2}'')^2 \nonumber \\
    \leq 96 \sum_{1 \leq i, j_1, j_2 \leq N} ((s_{j_1}' s_{j_2}' s_i' - s_{j_1}' s_{j_2}'' s_i'')^2 + (s_{j_1}' s_{j_2}'' s_i'' - s_{j_1}'' s_{j_2}'' s_i'')^2) + 8 N \| s' - s'' \|_2^2 \nonumber \\
    = 96 \sum_{1 \leq i, j_1, j_2 \leq N} ( s_{j_1}'^2 ( s_{j_2}' s_i' - s_{j_2}'' s_i'')^2 + s_{j_2}''^2 s_i''^2 (s_{j_1}' - s_{j_1}'')^2) + 8 N \| s' - s'' \|_2^2 \nonumber \\
    = 96 \sum_{1 \leq i, j_1, j_2 \leq N} ( s_{j_1}'^2 ( s_{j_2}' s_i' - s_{j_2}' s_i'' + s_{j_2}' s_i'' - s_{j_2}'' s_i'')^2 + s_{j_2}''^2 s_i''^2 (s_{j_1}' - s_{j_1}'')^2) + 8 N \| s' - s'' \|_2^2 \nonumber \\
    = 96 \sum_{1 \leq i, j_1, j_2 \leq N} ( 2 s_{j_1}'^2 s_{j_2}'^2 ( s_i' - s_i'')^2 + 2 s_{j_1}'^2 s_i''^2 (s_{j_2}' - s_{j_2}'')^2 + s_{j_2}''^2 s_i''^2 (s_{j_1}' - s_{j_1}'')^2) \nonumber \\
    + 8 N \| s' - s'' \|_2^2 \leq 96 \cdot 5 \sum_{j_1 = 1}^N s_{j_1}'^2 \sum_{j_2 = 1}^N s_{j_2}'^2 \sum_{i = 1}^N (s_i' - s_i'')^2 + 8 N \| s' - s'' \|_2 \leq 8 (60 + N) \| s' - s'' \|_2^2 \label{eq:ghb2}
\end{gather}
where we use the Cauchy-Schwarz inequality and, in particular, that $(a + b)^2 \leq 2( a^2 + b^2 )$ and $(a + b + c)^2 \leq 3 (a^2 + b^2 + c^2)$.

By Jensen's inequality, for every $X \in \mathcal{O} (N)$ we have $\| \nabla f (X) \|_F^2 = \| \mathbb{E} \nabla \widetilde{f} (X) \|_F^2 \leq \mathbb{E} \| \nabla \widetilde{f} (X) \|_F^2 \leq M_2^2$. By $X' (s)$, $X'' (s)$ denote $H(v^{(1)} (\eta)) \dots H (s (v')) \dots H (v^{(L)} (\eta))$ and $H(v^{(k - 1,1)}) \dots H (s (v'')) \dots H (v^{(k - 1,L)})$ as functions of $s (v')$ and $s(v'')$ respectively. Then
\begin{gather}
    \| \nabla_s X' (s) \|_F^2 = \sum_{i = 1}^N \| \nabla_{s_i} X' (s) \|_F^2 = \sum_{i = 1}^N \| H(v^{(1)} (\eta)) \dots \nabla_{s_i} H (s (v')) \dots H (v^{(L)} (\eta)) \|_F^2 \nonumber \\
    = \sum_{i = 1}^N \| \nabla_{s_i} H (s (v')) \|_F^2 = \| \nabla_s H (s (v')) \|_F^2 \leq 24 (N + 2), \nonumber \\
    \| \nabla_s X' (s (v')) - \nabla_s X'' (s (v'')) \|_F = \| \nabla_s X' (s (v')) - \nabla_s X' (s (v'')) + \nabla_s X' (s (v'')) \nonumber \\
    - \nabla_s X'' (s (v'')) \|_F \leq \| \nabla_s X' (s (v')) - \nabla_s X' (s (v'')) \|_F + \| \nabla_s X' (s (v''))
    - \nabla_s X'' (s (v'')) \|_F \nonumber \\
    = \sqrt{ \sum_{i = 1}^N \| H(v^{(1)} (\eta)) \dots ( \nabla_{s_i} H (s (v')) - \nabla_{s_i} H (s (v'')) ) \dots H (v^{(L)} (\eta)) \|_F^2 } + \| \nabla_s X' (s (v''))
    \nonumber \\
    - \nabla_s X'' (s (v'')) \|_F = \| \nabla_s H (s (v')) - \nabla_s H (s (v'')) \|_F + \| \nabla_s X' (s (v''))
    - \nabla_s X'' (s (v'')) \|_F \nonumber \\
    \leq 2 \sqrt{2 (N + 60)} \| s (v') - s (v'') \|_2 + \sum_{i = 1}^N \| H (v^{(1)} (\eta)) \dots \nabla_{s_i} H (s (v'')) \dots H(v^{(L)} (\eta)) \nonumber \\
    - H (v^{(k - 1,1)}) \dots \nabla_{s_i} H (s (v'')) \dots H (v^{(k - 1,L)}) \|_F \label{eq:xdiff1}
\end{gather}

For every $1 \leq i \leq N$ we have:
\begin{gather}
    \| H (v^{(1)} (\eta)) \dots \nabla_{s_i} H (s (v'')) \dots H(v^{(L)} (\eta)) - H (v^{(k - 1,1)}) \dots \nabla_{s_i} H (s (v'')) \dots H (v^{(k - 1,L)}) \|_F \label{eq:xdiff2} \\
    = \| H (v^{(1)} (\eta)) \dots \nabla_{s_i} H (s (v'')) \dots H(v^{(L)} (\eta)) - H (v^{(k - 1, 1)}) H (v^{(2)} (\eta)) \dots \nabla_{s_i} H (s (v'')) \dots H(v^{(L)} (\eta)) \nonumber \\
    + H (v^{(k - 1, 1)}) H (v^{(2)} (\eta)) \dots \nabla_{s_i} H (s (v'')) \dots H(v^{(L)} (\eta)) - H (v^{(k - 1,1)}) \dots \nabla_{s_i} H (s (v'')) \dots H (v^{(k - 1,L)}) \|_F \nonumber \\
    \leq  \| H (v^{(1)} (\eta)) \dots \nabla_{s_i} H (s (v'')) \dots H(v^{(L)} (\eta)) - H (v^{(k - 1, 1)}) H (v^{(2)} (\eta)) \dots \nabla_{s_i} H (s (v')) \dots H(v^{(L)} (\eta)) \|_F \nonumber \\
    + \| H (v^{(k - 1, 1)}) \biggl( H (v^{(2)} (\eta)) \dots \nabla_{s_i} H (s (v'')) \dots H(v^{(L)} (\eta)) - \dots \nabla_{s_i} H (s (v'')) \dots H (v^{(k - 1,L)}) \biggr) \|_F \nonumber \\
    \leq \| H (v^{(1)} (\eta)) - H (v^{(k - 1, 1)}) \|_F \cdot \| H (v^{(2)} (\eta)) \dots \nabla_{s_i} H (s (v'')) \dots H(v^{(L)} (\eta)) \|_F \nonumber \\
    + \| H (v^{(2)} (\eta)) \dots \nabla_{s_i} H (s (v'')) \dots H(v^{(L)} (\eta)) - H (v^{(k - 1,2)}) \dots \nabla_{s_i} H (s (v'')) \dots H (v^{(k - 1,L)}) \|_F \nonumber \\
    \leq \| H (v^{(1)} (\eta)) - H (v^{(k - 1, 1)}) \|_F \| \nabla_{s_i} H (s (v'')) \|_F \nonumber \\
    + \| H (v^{(2)} (\eta)) \dots \nabla_{s_i} H (s (v'')) \dots H(v^{(L)} (\eta)) - H (v^{(k - 1,2)}) \dots \nabla_{s_i} H (s (v'')) \dots H (v^{(k - 1,L)}) \|_F \label{eq:xdiff3} \\
    \leq \dots \leq \| \nabla_{s_i} H (s (v'')) \|_F \sum_{l' = 1}^{l - 1} \| H (v^{(l')} (\eta)) - H (v^{(k - 1, l')}) \|_F \nonumber \\
    + \| \nabla_{s_i} H (s (v'')) \dots H(v^{(L)} (\eta)) - \nabla_{s_i} H (s (v'')) \dots H (v^{(k - 1,L)}) \|_F, \nonumber 
\end{gather}
where $\dots$ correspond to repeating the reduction of type (\ref{eq:xdiff2}-\ref{eq:xdiff3}) to the term
\begin{equation*}
    \| H (v^{(2)} (\eta)) \dots \nabla_{s_i} H (s (v'')) \dots H(v^{(L)} (\eta)) - H (v^{(k - 1,2)}) \dots \nabla_{s_i} H (s (v'')) \dots H (v^{(k - 1,L)}) \|_F 
\end{equation*}
and so on until it becomes
\begin{equation}
    \| \nabla_{s_i} H (s (v'')) \dots H(v^{(L)} (\eta)) - \nabla_{s_i} H (s (v'')) \dots H (v^{(k - 1,L)}) \|_F . \label{eq:xdiff4}
\end{equation}
Then, one can repeat the reduction of type (\ref{eq:xdiff2}-\ref{eq:xdiff3}) to (\ref{eq:xdiff4}), but by extracting right-hand side reflections, so that (\ref{eq:xdiff4}) becomes $\| \nabla_{s_i} H (s (v'')) - \nabla_{s_i} H (s (v'')) \|_F = 0$ and (\ref{eq:xdiff2}) is continued as
\begin{gather*}
    \| H (v^{(1)} (\eta)) \dots \nabla_{s_i} H (s (v')) \dots H(v^{(L)} (\eta)) - H (v^{(k - 1,1)}) \dots \nabla_{s_i} H (s (v'')) \dots H (v^{(k - 1,L)}) \|_F \\
    \leq \| \nabla_{s_i} H (s (v')) \|_F \sum_{l' \neq l} \| H (v^{(l')} (\eta)) - H (v^{(k - 1, l')}) \|_F + \| \nabla_{s_i} H (s (v')) - \nabla_{s_i} H (s (v'')) \|_F \\
    \leq  \| \nabla_{s_i} H (s (v')) \|_F \sum_{l' = 1}^L \| H (v^{(l')} (\eta)) - H (v^{(k - 1, l')}) \|_F
\end{gather*}
We sum this inequality for $1 \leq i \leq N$, apply Cauchy-Schwarz inequality and use (\ref{eq:ghb1},\ref{eq:ghb2}) to obtain that
\begin{gather}
    \sum_{i = 1}^N \| H (v^{(1)} (\eta)) \dots \nabla_{s_i} H (s (v')) \dots H(v^{(L)} (\eta)) - H (v^{(k - 1,1)}) \dots \nabla_{s_i} H (s (v'')) \dots H (v^{(k - 1,L)}) \|_F \nonumber \\
    \leq \sqrt{N} \sqrt{\sum_{i = 1}^N \| \nabla_{s_i} H (s (v')) \|_F^2} \sum_{l' = 1}^L \| H (v^{(l')} (\eta)) - H (v^{(k - 1, l')}) \|_F  \\
    = 2 \sqrt{N} \| \nabla_s H (s (v')) \|_F \sum_{l' = 1}^L \| \frac{v^{(l')} (\eta) v^{(l')} (\eta)^\top}{\| v^{(l')} (\eta) \|_2^2} - \frac{v^{(k - 1, l')} v^{(k - 1, l') \top}}{\| v^{(k - 1, l')} \|_2^2} \|_F \\
    \leq 4 \sqrt{6 N (N + 2)} \sum_{l' = 1}^L \| \frac{v^{(l')} (\eta) v^{(l')} (\eta)^\top}{\| v^{(l')} (\eta) \|_2^2} - \frac{v^{(k - 1, l')} v^{(k - 1, l') \top}}{\| v^{(k - 1, l')} \|_2^2} \|_F \label{eq:xdiff5}
\end{gather}
For each $1 \leq l' \leq L$ we have
\begin{gather*}
    \| \frac{v^{(l')} (\eta) v^{(l')} (\eta)^\top}{\| v^{(l')} (\eta) \|_2^2} - \frac{v^{(k - 1, l')} v^{(k - 1, l') \top}}{\| v^{(k - 1, l')} \|_2^2} \|_F \\
    \leq  \| \frac{v^{(l')} (\eta) v^{(l')} (\eta)^\top}{\| v^{(l')} (\eta) \|_2^2} - \frac{v^{(l')} (\eta) v^{(l')} (\eta)^\top}{\| v^{(l')} (\eta) \|_2 \| v^{(k - 1, l')} \|_2} + \frac{v^{(l')} (\eta) v^{(l')} (\eta)^\top}{\| v^{(l')} (\eta) \|_2 \| v^{(k - 1, l')} \|_2} - \frac{v^{(k - 1, l')} v^{(k - 1, l') \top}}{\| v^{(k - 1, l')} \|_2^2} \|_F \\
    \leq | \frac{1}{\| v^{(l')} (\eta) \|_2} - \frac{1}{\| v^{(k - 1, l')} \|_2} | \cdot \frac{1}{\| v^{(l')} (\eta) \|_2} \| v^{(l')} (\eta) v^{(l')} (\eta)^\top \|_F \\
    + \frac{1}{\| v^{(k - 1, l')} \|_2} \| \frac{v^{(l')} (\eta) v^{(l')} (\eta)^\top}{\| v^{(l')} (\eta) \|_2} - \frac{v^{(k - 1, l')} v^{(k - 1, l') \top}}{\| v^{(k - 1, l')} \|_2} \|_F \\
    \leq | \| v^{(l')} (\eta) \|_2 - \| v^{(k - 1, l')} \|_2 | \\
    \cdot \frac{1}{\| v^{(l')} (\eta) \|_2^2 \| v^{(k - 1, l')} \|_2} \| v^{(l')} (\eta) \|_2^2 + \frac{1}{\| v^{(k - 1, l')} \|_2} \| \frac{v^{(l')} (\eta) v^{(l')} (\eta)^\top}{\| v^{(l')} (\eta) \|_2} - \frac{v^{(k - 1, l')} v^{(k - 1, l') \top}}{\| v^{(k - 1, l')} \|_2} \|_F \\
    \leq \frac{1}{A} \| v^{(l')} (\eta) - v^{(k - 1, l')} \|_2 \\
    + \frac{1}{\| v^{(k - 1, l')} \|_2} \| \frac{v^{(l')} (\eta) v^{(l')} (\eta)^\top}{\| v^{(l')} (\eta) \|_2} - \frac{v^{(k - 1, l')} v^{(k - 1, l') \top}}{\| v^{(l')} (\eta) \|_2} + \frac{v^{(k - 1, l')} v^{(k - 1, l') \top}}{\| v^{(l')} (\eta) \|_2} - \frac{v^{(k - 1, l')} v^{(k - 1, l') \top}}{\| v^{(k - 1, l')} \|_2} \|_F \\
    \leq \frac{1}{A} \| v^{(l')} (\eta) - v^{(k - 1, l')} \|_2 + \frac{1}{\| v^{(k - 1, l')} \|_2 \| v^{(l')} (\eta) \|_2} \| v^{(l')} (\eta) v^{(l')} (\eta)^\top - v^{(k - 1, l')} v^{(k - 1, l') \top} \|_F \\
    + \frac{1}{\| v^{(k - 1, l')} \|_2} | \frac{1}{\| v^{(l')} (\eta) \|_2} - \frac{1}{\| v^{(k - 1, l')} \|_2} | \| v^{(k - 1, l')} v^{(k - 1, l') \top} \|_F \\
    = \frac{1}{A} \| v^{(l')} (\eta) - v^{(k - 1, l')} \|_2 + \frac{1}{\| v^{(k - 1, l')} \|_2 \| v^{(l')} (\eta) \|_2} \| v^{(l')} (\eta) v^{(l')} (\eta)^\top - v^{(k - 1, l')} v^{(k - 1, l') \top} \|_F \\
    + \frac{1}{\| v^{(k - 1, l')} \|_2^2 \| v^{(l')} (\eta) \|_2} | \| v^{(l')} (\eta) \|_2 - \| v^{(k - 1, l')} \|_2 | \| v^{(k - 1, l')} \|_2^2 \\
    \leq \frac{1}{A} \| v^{(l')} (\eta) - v^{(k - 1, l')} \|_2 \\
    + \frac{1}{\| v^{(k - 1, l')} \|_2 \| v^{(l')} (\eta) \|_2} \| v^{(l')} (\eta) v^{(l')} (\eta)^\top - v^{(k - 1, l')} v^{(k - 1, l') \top} \|_F + \frac{1}{\| v^{(l')} (\eta) \|_2} \| v^{(l')} (\eta) - v^{(k - 1, l')} \|_2 \\
    \leq \frac{2}{A} \| v^{(l')} (\eta) - v^{(k - 1, l')} \|_2 \\
    + \frac{1}{\| v^{(k - 1, l')} \|_2 \| v^{(l')} (\eta) \|_2} \| v^{(l')} (\eta) v^{(l')} (\eta)^\top - v^{(l')} (\eta) v^{(k - 1, l') \top} + v^{(l')} (\eta) v^{(k - 1, l') \top} - v^{(k - 1, l')} v^{(k - 1, l') \top} \|_F \\
    \leq \frac{2}{A} \| v^{(l')} (\eta) - v^{(k - 1, l')} \|_2 \\
    + \frac{1}{\| v^{(k - 1, l')} \|_2 \| v^{(l')} (\eta) \|_2} ( \| v^{(l')} (\eta) ( v^{(l')} (\eta) - v^{(k - 1, l')} )^\top \|_F + \| ( v^{(l')} (\eta) - v^{(k - 1, l')} ) v^{(k - 1, l') \top} \|_F ) \\
    \leq \frac{2}{A} \| v^{(l')} (\eta) - v^{(k - 1, l')} \|_2 + \frac{1}{\| v^{(k - 1, l')} \|_2 \| v^{(l')} (\eta) \|_2} (\| v^{(l')} (\eta) \|_2 + \| v^{(k - 1, l')} \|_2) \| v^{(l')} (\eta) - v^{(k - 1, l')} \|_2 \\
    \leq \frac{2}{A} \| v^{(l')} (\eta) - v^{(k - 1, l')} \|_2 + (\frac{1}{\| v^{(l')} (\eta) \|_2} + \frac{1}{\| v^{(k - 1, l')} \|_2}) \| v^{(l')} (\eta) - v^{(k - 1, l')} \|_2 \leq \frac{4}{A} \| v^{(l')} (\eta) - v^{(k - 1, l')} \|_2
\end{gather*}
We combine this with (\ref{eq:xdiff1}, \ref{eq:xdiff5}) and conclude that
\begin{gather*}
    \| \nabla_s X' (s (v')) - \nabla_s X'' (s (v'')) \|_F \leq 2 \sqrt{2 (N + 60)} \| s (v') - s (v'') \|_2 \\
    + 4 \sqrt{6 N (N + 2)} \sum_{l' = 1}^L \biggl( \frac{4}{A} \| v^{(l')} (\eta) - v^{(k - 1, l')} \|_2 \biggr) \\
    \leq \frac{2}{A} \sqrt{2 (N + 60)} \| v^{(l)} - v^{(k - 1,l)} \|_2 + 4 \sqrt{6 N (N + 2)} \sum_{l' = 1}^L \frac{4}{A} \| v^{(l')} (\eta) - v^{(k - 1, l')} \|_2 \\
    \leq \frac{2}{A} \biggl( \sqrt{2 (N + 60)} + 8 \sqrt{6 N (N + 2)} \biggr) \sum_{l' = 1}^L \| v^{(l')} (\eta) - v^{(k - 1, l')} \|_2
\end{gather*}

Next, we deduce that
\begin{gather*}
    \| \nabla_s g'(s(v')) \|_2^2 = \sum_{i = 1}^N ( \nabla_{s_i} g'(s(v')) )^2 = \sum_{i = 1}^N ( \nabla_{s_i} f ( X' (s(v')) ) )^2 \\
    = \sum_{i = 1}^N \mathrm{Trace} ( \nabla f ( X' (s(v')) )^\top \nabla_{s_i} X' (s(v')) )^2 \leq \sum_{i = 1}^N \| \nabla f ( X' (s(v')) ) \|_F^2 \| \nabla_{s_i} X' (s(v')) \|_F^2 \\
    \leq M_2 \sum_{i = 1}^N \| \nabla_{s_i} X' (s(v')) \|_F^2 = M_2 \| \nabla_s X' (s(v')) \|_F^2 \leq 24 M_2 (N + 2).
\end{gather*}
Analogously it is derived that $\| \nabla_s g'' (s(v'')) \|_2^2 \leq 24 M_2 (N + 2)$. We proceed by observing that
\begin{gather*}
    \| \nabla_s g'(s(v')) -  \nabla_s g''(s(v'')) \|_2^2 = \sum_{i = 1}^N ( \nabla_{s_i} g'(s(v')) -  \nabla_{s_i} g''(s(v'')) )^2 \\
    = \sum_{i = 1}^N ( \mathrm{Trace} ( \nabla f ( X' (s(v')) )^\top \nabla_{s_i} X' (s(v')) ) -  \mathrm{Trace} ( \nabla f ( X'' (s(v'')) )^\top \nabla_{s_i} X'' (s(v'')) ) )^2 \\
    = \sum_{i = 1}^N ( \mathrm{Trace} ( \nabla f ( X' (s(v')) )^\top \nabla_{s_i} X' (s(v')) ) - \mathrm{Trace} ( \nabla f ( X' (s(v')) )^\top \nabla_{s_i} X'' (s(v'')) ) \\
    + \mathrm{Trace} ( \nabla f ( X' (s(v')) )^\top \nabla_{s_i} X'' (s(v'')) ) - \mathrm{Trace} ( \nabla f ( X'' (s(v'')) )^\top \nabla_{s_i} X'' (s(v'')) ) )^2 \\
    \leq 2 \sum_{i = 1}^N (\mathrm{Trace} ( \nabla f ( X' (s(v')) )^\top ( \nabla_{s_i} X' (s(v'))  - \nabla_{s_i} X'' (s(v'')) ))^2 \\
    + \mathrm{Trace} ( ( \nabla f ( X' (s(v')) ) - \nabla f ( X'' (s(v'')) ))^\top \nabla_{s_i} X'' (s(v'')) )^2) \\
    \leq 2 \sum_{i = 1}^N ( \| \nabla f ( X' (s(v')) ) \|_F^2 \| \nabla_{s_i} X' (s(v'))  - \nabla_{s_i} X'' (s(v'')) \|_F^2 \\
    + \| \nabla f ( X' (s(v')) ) - \nabla f ( X'' (s(v'')) ) \|_F^2 \| \nabla_{s_i} X'' (s(v'')) \|_F^2 ) \\
    \leq 2 M_2 \sum_{i = 1}^N \| \nabla_{s_i} X' (s(v'))  - \nabla_{s_i} X'' (s(v'')) \|_F^2 + 2 M_1^2 \| X' (s(v')) - X'' (s(v'')) \|_F^2 \sum_{i = 1}^N \| \nabla_{s_i} X'' (s(v'')) \|_F^2 \\
    \leq ( 2 M_2  + 2 M_1^2 \| \nabla_s X'' (s(v'')) \|_F^2) \| \nabla_s X' (s(v'))  - \nabla_s X'' (s(v'')) \|_F^2 \\
    \leq ( 2 M_2  + 48 M_1^2 (N + 2) ) (\frac{2}{A} ( \sqrt{2 (N + 60)} + 8 \sqrt{6 N (N + 2)} ) \sum_{l' = 1}^L \| v^{(l')} (\eta) - v^{(k - 1, l')} \|_2)^2.
\end{gather*}
We continue (\ref{eq:bnd2}) and deduce that
\begin{align*}
    &\| \nabla_{v^{(l)}} g' (v^{(l)}) - \nabla_{v^{(k - 1,l)}} g'' (v^{(k - 1,l)}) \|_2 \leq \frac{10}{A^2} \sqrt{6 M_2 (N + 2)} \| v' - v'' \|_2 \\
    &+ \frac{2}{A^2} \sqrt{2 M_2  + 48 M_1^2 (N + 2)} (\sqrt{2 (N + 60)} + 8 \sqrt{6 N (N + 2)} ) \sum_{l' = 1}^L \| v^{(l')} (\eta) - v^{(k - 1, l')} \|_2 \\
    &\leq \frac{\mathcal{C}}{L} \sum_{l' = 1}^L \| v^{(l')} (\eta) - v^{(k - 1, l')} \|_2 \leq \frac{\mathcal{C}}{\sqrt{L}} \sqrt{ \sum_{l' = 1}^L \| v^{(l')} (\eta) - v^{(k - 1, l')} \|_2^2} .
\end{align*}
We plug the last inequality into (\ref{eq:bnd1}) to obtain that
\begin{align*}
    &| \nabla h(\eta) - \nabla h(0) | \leq \frac{\mathcal{C}}{\sqrt{L}} \cdot \sqrt{\widetilde{M}} \cdot \sqrt{L \sum_{l' = 1}^L \| v^{(l')} (\eta) - v^{(k - 1, l')} \|_2^2} \\
    &= \mathcal{C} \sqrt{\widetilde{M}} \cdot \sqrt{\sum_{l = 1}^L \| - \eta \nabla_{v^{(k - 1,l)}} \widetilde{f} (H(v^{(k - 1,1)}) \dots H(v^{(k - 1,L)})) \|_2^2} \\
    &= \eta \cdot \mathcal{C} \sqrt{\widetilde{M}} \sqrt{\sum_{l = 1}^L \| \nabla_{v^{(k - 1,l)}} \widetilde{f} (H(v^{(k - 1,1)}) \dots H(v^{(k - 1,L)})) \|_2^2} =\mathcal{C} \widetilde{M}  \eta .
\end{align*}
\end{proof}

\begin{lemma} \label{lemma:varbnd}
Suppose conditions of Theorem \ref{th:conv} (and, consequently, of Lemma \ref{lemma:def}) hold. For any $v^{(1)}, \dots, v^{(L)} \in \mathcal{S}$
\begin{equation*}
    \mathbb{E} [ \sum_{l = 1}^L \| \nabla_{v^{(l)}} \widetilde{f} (H(v^{(1)}) \dots H(v^{(L)})) \|_2^2 ] \leq \mathcal{D}, \quad \mathcal{D} = \frac{24}{A^2} N (N + 2) L M_2 .
\end{equation*}
\end{lemma}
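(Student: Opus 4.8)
The plan is to reuse, almost verbatim, the single-reflection Jacobian estimate already established in the proof of Lemma~\ref{lemma:gl}, but now applied to the stochastic proxy $\widetilde{f}$ and under expectation. First I would fix an index $l$ and a realization of $\widetilde{f}$, and exploit the scale-invariance $H(v) = H(v/\| v \|_2)$ to write $\widetilde{f}(H(v^{(1)}) \dots H(v^{(L)}))$ as a function $\widetilde{g}_l$ of the normalized vector $s(v^{(l)}) = v^{(l)}/\| v^{(l)} \|_2$ alone. As in Lemmas~\ref{lemma:def} and~\ref{lemma:gl}, differentiating through the normalization yields
\begin{equation*}
    \nabla_{v^{(l)}} \widetilde{f} = \frac{1}{\| v^{(l)} \|_2} (I - s(v^{(l)}) s(v^{(l)})^\top) \nabla_s \widetilde{g}_l (s(v^{(l)})),
\end{equation*}
so, since $I - s s^\top$ is an orthogonal projector (spectral norm $\leq 1$) and $\| v^{(l)} \|_2 > A$ because $v^{(l)} \in \mathcal{S}$, I obtain the pointwise bound $\| \nabla_{v^{(l)}} \widetilde{f} \|_2 \leq A^{-1} \| \nabla_s \widetilde{g}_l (s(v^{(l)})) \|_2$.

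Next I would bound $\| \nabla_s \widetilde{g}_l \|_2^2$ coordinatewise. Let $X'(s)$ denote the product with its $l$-th factor replaced by $H(s)$; then $\nabla_{s_i} X'(s) = H(v^{(1)}) \dots (\nabla_{s_i} H(s)) \dots H(v^{(L)})$ is just $\nabla_{s_i} H(s)$ conjugated by orthogonal matrices, whence $\| \nabla_{s_i} X'(s) \|_F = \| \nabla_{s_i} H(s) \|_F$. By the chain rule $\nabla_{s_i} \widetilde{g}_l = \mathrm{Trace}(\nabla \widetilde{f}(X')^\top \nabla_{s_i} X')$, so Cauchy--Schwarz in the Frobenius inner product, together with the reflection-Jacobian estimate $\| \nabla_s H(s) \|_F^2 \leq 24 (N + 2)$ from~(\ref{eq:ghb1}), gives $| \nabla_{s_i} \widetilde{g}_l | \leq \| \nabla \widetilde{f}(X') \|_F \| \nabla_s H(s) \|_F$ for each of the $N$ coordinates. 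Summing the squares over $i$ then yields $\| \nabla_s \widetilde{g}_l \|_2^2 \leq 24 N (N + 2) \| \nabla \widetilde{f}(X') \|_F^2$; the extra factor $N$ arises precisely from bounding each coordinate by the full tensor norm before summing over the $N$ coordinates.

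Finally I would combine the two estimates. Since $X'(s(v^{(l)}))$ equals the original product $X = H(v^{(1)}) \dots H(v^{(L)})$, the two displays above give $\| \nabla_{v^{(l)}} \widetilde{f} \|_2^2 \leq 24 N (N + 2) A^{-2} \| \nabla \widetilde{f}(X) \|_F^2$. Summing over $l = 1, \dots, L$, taking expectation, and invoking the variance hypothesis $\mathbb{E} \| \nabla \widetilde{f}(X) \|_F^2 \leq M_2$ of Theorem~\ref{th:conv} produces $\mathbb{E}[\sum_{l=1}^L \| \nabla_{v^{(l)}} \widetilde{f} \|_2^2] \leq 24 N (N + 2) L A^{-2} M_2 = \mathcal{D}$, as claimed. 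I expect the only delicate point to be the handling of the normalization map $v \mapsto v / \| v \|_2$: one must check that the gradient genuinely factors through $s(v)$ and that the projector $I - s s^\top$ contributes only a harmless factor, exactly as in Lemma~\ref{lemma:gl}. Everything else is Cauchy--Schwarz, orthogonal-invariance of the Frobenius norm under pre- and post-multiplication, and the already-computed bound~(\ref{eq:ghb1}), so no genuinely new estimate is needed.
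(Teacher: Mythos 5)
Your proposal is correct and follows essentially the same route as the paper's proof: factor the dependence through the normalized vector $s(v^{(l)})$, apply Cauchy--Schwarz to the trace form of the chain rule, use orthogonal invariance of the Frobenius norm to reduce to $\nabla_s H(s)$, invoke the bound (\ref{eq:ghb1}), and finish with the variance hypothesis $\mathbb{E}\| \nabla \widetilde{f}(X) \|_F^2 \leq M_2$. The only cosmetic difference is where the factor $N$ enters --- the paper extracts it from $\| I - s s^\top \|_F^2 \leq N$ in the Jacobian of the normalization, whereas you use the spectral norm of that projector and instead bound each coordinate slice $\| \nabla_{s_i} H(s) \|_F$ by the full tensor norm; both yield the identical constant $\mathcal{D}$.
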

\begin{proof}
For each $1 \leq l \leq L$ we have
\begin{gather*}
    \| \nabla_{v^{(l)}} \widetilde{f} (H(v^{(1)}) \dots H(v^{(L)})) \|_2^2 = \sum_{i = 1}^N ( \nabla_{v_i^{(l)}} \widetilde{f} (H(v^{(1)}) \dots H(v^{(L)})) )^2 \\
    = \sum_{i = 1}^N \mathrm{Trace} ( \nabla  \widetilde{f} (H(v^{(1)}) \dots H(v^{(L)}))^\top \nabla_{v_i^{(l)}} \biggl( H(v^{(1)}) \dots H(v^{(L)}) \biggr) )^2 \\
    \leq \sum_{i = 1}^N \| \nabla \widetilde{f} (H(v^{(1)}) \dots H(v^{(L)})) \|_F^2 \cdot \| H(v^{(1)}) \dots \nabla_{v_i^{(l)}} H(v^{(l)}) \dots H(v^{(L)}) \|_F^2 \\
    = \| \nabla \widetilde{f} (H(v^{(1)}) \dots H(v^{(L)})) \|_F^2 \sum_{i = 1}^N \| \nabla_{v_i^{(l)}} H(v^{(l)}) \|_F^2 = \| \nabla \widetilde{f} (H(v^{(1)}) \dots H(v^{(L)})) \|_F^2 \| \nabla_{v^{(l)}} H(v^{(l)}) \|_F^2 \\
    = \| \nabla \widetilde{f} (H(v^{(1)}) \dots H(v^{(L)})) \|_F^2 \sum_{i = 1}^N \| \nabla_{v_i^{(l)}} H(s (v^{(l)}) ) \|_F^2 \\
    = \| \nabla \widetilde{f} (H(v^{(1)}) \dots H(v^{(L)})) \|_F^2 \sum_{i = 1}^N \| \sum_{j = 1}^N \nabla_{v_i^{(l)}} s_j(v^{(l)}) \nabla_{s_j} H(s (v^{(l)})) \|_F^2 \\
    \leq \| \nabla \widetilde{f} (H(v^{(1)}) \dots H(v^{(L)})) \|_F^2 \sum_{i = 1}^N \| \nabla_{v_i^{(l)}} s (v^{(l)}) \|_2^2 \| \nabla_s H(s (v^{(l)})) \|_F^2 \\
    = \| \nabla \widetilde{f} (H(v^{(1)}) \dots H(v^{(L)})) \|_F^2 \| \nabla_{v^{(l)}} s (v^{(l)}) \|_F^2 \| \nabla_s H(s (v^{(l)})) \|_F^2 \\
    \leq \| \nabla \widetilde{f} (H(v^{(1)}) \dots H(v^{(L)})) \|_F^2 \frac{1}{\| v^{(l)} \|_2^2} \| I - s (v^{(l)}) s (v^{(l)})^\top \|_F^2 \cdot 24 (N + 2) \\
    \leq \frac{24}{A^2} N (N + 2) \| \nabla \widetilde{f} (H(v^{(1)}) \dots H(v^{(L)})) \|_F^2 
\end{gather*}
where we use $\| I - s (v^{(l)}) s (v^{(l)})^\top \|_F^2 \leq N$ because $I - s (v^{(l)}) s (v^{(l)})^\top$ is an orthogonal projection matrix. Next, we obtain that
\begin{gather*}
    \mathbb{E} \sum_{l = 1}^L \| \nabla_{v^{(k - 1,l)}} \widetilde{f} (H(v^{(k - 1,1)}) \dots H(v^{(k - 1,L)})) \|_2^2 \leq \frac{24}{A^2} N (N + 2) \cdot \mathbb{E} L \| \nabla \widetilde{f} (H(v^{(k - 1,1)}) \dots H(v^{(k - 1,L)})) \|_F^2 \leq \mathcal{D} .
\end{gather*}
\end{proof}

\begin{proof}[Theorem \ref{th:conv} proof]

As shown by Lemma \ref{lemma:def}, all step sizes are well-defined. We adapt the proof of Theorem 4.10 from \citep{bottou}. We consider a step $k$ and deduce from Lemma \ref{lemma:gl} that
\begin{gather*}
    h (k^{-0.5}) - h (0) - k^{-0.5} \nabla h(0) = \int_0^{k^{-0.5}} (\nabla h(\eta) - \nabla h (0)) d \eta \leq \int_0^{k^{-0.5}} | \nabla h(\eta) - \nabla h (0) | d \eta \\
    \leq \mathcal{C} \widetilde{M} \int_0^{k^{-0.5}} \eta d \eta = \frac{\mathcal{C} \widetilde{M} k^{-1}}{2}.
\end{gather*}
By expanding $h$'s definition, we deduce
\begin{gather*}
    f \biggl( H (v^{(k,1)}) \dots H(v^{(k,L)}) \biggr) - f \biggl( H (v^{(k - 1,1)}) \dots H(v^{(k - 1,L)})  \biggr) \leq \frac{\mathcal{C} \widetilde{M} k^{-1}}{2} \\
    - k^{-0.5} \sum_{l = 1}^L \nabla_{v^{(k - 1,l)}} \widetilde{f} (H(v^{(k - 1,1)}) \dots H(v^{(k - 1,L)}))^\top \nabla_{v^{(k - l,l)}} f (H(v^{(k - 1,1)}) \dots H(v^{(k - 1,L)})).
\end{gather*}
Take expectation conditioned on $\mathcal{F}_k$ -- a $\sigma$-algebra associated with $\{ \{ v^{(k',1)}, \dots, v^{(k',L)} \} \}_{k' = 1}^{k - 1}$:
\begin{gather*}
    \mathbb{E} [ f \biggl( H (v^{(k,1)}) \dots H(v^{(k,L)}) \biggr) | \mathcal{F}_k ] - f \biggl( H (v^{(k - 1,1)}) \dots H(v^{(k - 1,L)})  \biggr) \leq \frac{\mathcal{C} \mathbb{E} [ \widetilde{M} | \mathcal{F}_k ] k^{-1}}{2} \\
    - k^{-0.5} \sum_{l = 1}^L \mathbb{E} [\nabla_{v^{(k - 1,l)}} \widetilde{f} (H(v^{(k - 1,1)}) \dots H(v^{(k - 1,L)})) | \mathcal{F}_k ]^\top \times \nabla_{v^{(k - 1,l)}} f (H(v^{(k - 1,1)}) \dots H(v^{(k - 1,L)})).
\end{gather*}
By $\widetilde{f}$'s definition we have
\begin{gather*}
    \mathbb{E} [\nabla_{v^{(k - 1,l)}} \widetilde{f} (H(v^{(k - 1,1)}) \dots H(v^{(k - 1,L)})) | \mathcal{F}_k ] = \nabla_{v^{(k - 1,l)}} f (H(v^{(k - 1,1)}) \dots H(v^{(k - 1,L)}))
\end{gather*}
and, therefore,
\begin{gather}
    \mathbb{E} [ f \biggl( H (v^{(k,1)}) \dots H(v^{(k,L)}) \biggr) | \mathcal{F}_k ] - f \biggl( H (v^{(k - 1,1)}) \dots H(v^{(k - 1,L)})  \biggr) \leq \frac{\mathcal{C} \mathbb{E} [ \widetilde{M} | \mathcal{F}_k ] k^{-1}}{2} \nonumber \\
    - k^{-0.5} \sum_{l = 1}^L \| \nabla_{v^{(k - 1,l)}} f (H(v^{(k - 1,1)}) \dots H(v^{(k - 1,L)})) \|_2^2. \label{eq:bnd3}
\end{gather}

Next, we combine (\ref{eq:bnd3}) and Lemma \ref{lemma:varbnd}, applied to $\widetilde{M}$, to obtain that
\begin{gather*}
    \mathbb{E} [ f \biggl( H (v^{(k,1)}) \dots H(v^{(k,L)}) \biggr) | \mathcal{F}_k ] - f \biggl( H (v^{(k - 1,1)}) \dots H(v^{(k - 1,L)})  \biggr) \leq \frac{\mathcal{C} \mathcal{D} k^{-1}}{2} \\
    - k^{-0.5} \sum_{l = 1}^L \| \nabla_{v^{(k - 1,l)}} f (H(v^{(k - 1,1)}) \dots H(v^{(k - 1,L)})) \|_2^2.
\end{gather*}
Take full expectation and regroup:
\begin{gather*}
    k^{-0.5} \mathbb{E} \sum_{l = 1}^L \| \nabla_{v^{(k - 1,l)}} f (H(v^{(k - 1,1)}) \dots H(v^{(k - 1,L)})) \|_2^2 \leq \mathbb{E} f \biggl( H (v^{(k - 1,1)}) \dots H(v^{(k - 1,L)}) \biggr) \\
    - \mathbb{E} f \biggl( H (v^{(k,1)}) \dots H(v^{(k,L)}) \biggr) + \frac{\mathcal{C} \mathcal{D} k^{-1}}{2} .
\end{gather*}
For $K > 0$ take a sum for $1 \leq k \leq K$:
\begin{gather*}
    \sum_{k' = 1}^K k'^{-0.5} \mathbb{E} \sum_{l = 1}^L \| \nabla_{v^{(k' - 1,l)}} f (H(v^{(k' - 1,1)}) \dots H(v^{(k' - 1,L)})) \|_2^2 \leq f \biggl( H (v^{(0,1)}) \dots H(v^{(0,L)}) \biggr) \\
    - \mathbb{E} f \biggl( H (v^{(K,1)}) \dots H(v^{(K,L)}) \biggr) + \sum_{k' = 1}^K \frac{\mathcal{C} \mathcal{D} k'^{-1}}{2} .
\end{gather*}
$f$ is continuous on a compact domain $\mathcal{O} (N)$, hence there exists a minimal value $f^*$ of $f$ on $\mathcal{O} (N)$. We continue and derive that
\begin{gather*}
    \sum_{k' = 1}^K k'^{-0.5} \mathbb{E} \sum_{l = 1}^L \| \nabla_{v^{(k' - 1,l)}} f (H(v^{(k' - 1,1)}) \dots H(v^{(k' - 1,L)})) \|_2^2 \leq f \biggl( H (v^{(0,1)}) \dots H(v^{(0,L)}) \biggr) - f^* + \sum_{k' = 1}^K \frac{\mathcal{C} \mathcal{D} k'^{-1}}{2}, \\
    \min_{0 \leq k' < K} \mathbb{E} \sum_{l = 1}^L \| \nabla_{v^{(k' - 1,l)}} f (H(v^{(k' - 1,1)}) \dots H(v^{(k' - 1,L)})) \|_2^2 \\
    \leq \frac{1}{\sum_{k' = 1}^K k'^{-0.5}} \sum_{k' = 1}^K k'^{-0.5} \mathbb{E} \sum_{l = 1}^L \| \nabla_{v^{(k' - 1,l)}} f (H(v^{(k' - 1,1)}) \dots H(v^{(k' - 1,L)})) \|_2^2 \\
    \leq \frac{1}{\sum_{k' = 1}^K k'^{-0.5}} (f \biggl( H (v^{(0,1)}) \dots H(v^{(0,L)}) \biggr) - f^*) + \frac{\mathcal{C} \mathcal{D}}{2} \frac{\sum_{k' = 1}^K k'^{-1}}{\sum_{k' = 1}^K k'^{-0.5}}.
\end{gather*}
The proof is concluded by observing that $\sum_{k' = 1}^K k'^{-0.5} = \Omega (K^{0.5})$ and $\sum_{k' = 1}^K k'^{-1} = O (\log K) = o (K^\epsilon)$ for any $\epsilon > 0$.
\end{proof}

\end{document}